\documentclass[11pt]{article}
\usepackage{silence}
\usepackage{acl}
\usepackage{times}
\usepackage{latexsym}
\usepackage[T1]{fontenc}
\usepackage[utf8]{inputenc}
\usepackage{inconsolata}
\usepackage{graphicx}
\usepackage{booktabs}
\usepackage{multirow}
\usepackage{makecell}
\usepackage{array}
\usepackage{tabularx}
\usepackage{amsmath,amssymb,mathtools}
\usepackage{amsthm}
\usepackage{xcolor}
\usepackage{tikz}
\usetikzlibrary{arrows.meta,positioning,calc,fit,backgrounds,patterns}
\usepackage{algorithm}
\usepackage{float}
\usepackage{placeins}
\usepackage{algpseudocode}
\usepackage{url}
\usepackage{xspace}

\definecolor{flashblue}{HTML}{155EEF}
\definecolor{flashcyan}{HTML}{0891B2}
\definecolor{flashgreen}{HTML}{15803D}
\definecolor{flashorange}{HTML}{C2410C}
\definecolor{flashpurple}{HTML}{6D28D9}
\definecolor{softblue}{HTML}{EAF1FF}
\definecolor{softcyan}{HTML}{E8FBFF}
\definecolor{softgreen}{HTML}{ECFDF3}
\definecolor{softorange}{HTML}{FFF4E8}
\definecolor{softpurple}{HTML}{F3EEFF}

\newcommand{\aal}{\ensuremath{\mathrm{AAL}}\xspace}
\newcommand{\rms}{\ensuremath{\mathrm{RMS}}}

\newcommand{\topk}{\operatorname{TopK}}

\newtheorem{proposition}{Proposition}[section]
\newtheorem{corollary}{Corollary}[section]
\newtheorem{theorem}{Theorem}[section]

\title{SpecRoll: Fast-Slow Verifier-Feedback Adaptation for Speculative Reinforcement Learning Rollouts}

\author{
\textbf{Nhat Minh Pham}\textsuperscript{1,2,*} \quad
\textbf{Duy Tung Doan}\textsuperscript{2,*} \quad
\textbf{Thi Duyen Ngo}\textsuperscript{1}
\\
\textbf{Vinh Van Nguyen}\textsuperscript{1} \quad
\textbf{Khac-Hoai Nam Bui}\textsuperscript{2}
\\[2mm]
\textsuperscript{1}VNU University of Engineering and Technology,
Vietnam National University, Hanoi, Vietnam
\\
\textsuperscript{2}Viettel AI, Viettel Group, Hanoi, Vietnam
\\
\texttt{\{minhpn.266, duytunghanam\}@gmail.com}
\\
\textsuperscript{*}Equal contribution
}

\begin{document}
\raggedbottom
\setlength{\emergencystretch}{3em}
\hbadness=10000
\vbadness=10000
\maketitle

\begin{abstract}
Reinforcement learning (RL) post-training improves the reasoning capabilities of large language models, but autoregressive rollout generation remains a major efficiency bottleneck. Speculative decoding can accelerate generation, yet applying it during RL is difficult because the target policy continually evolves: static proposers become stale, while frequent drafter updates add substantial overhead. We introduce \textbf{SpecRoll}, a speculative rollout engine that preserves the target model's sampling distribution while adapting at two
timescales. Lightweight future-token heads generate parallel proposals, while our proposed \textbf{Reflex} module uses delayed verifier feedback to perform bounded, trajectory-local hidden-state corrections without backpropagation. A complementary slow path updates the head parameters only when sustained degradation is detected. SpecRoll combines these
mechanisms with concurrency-aware sparse-tree verification and exact target
verification, leaving the target rollout distribution and GRPO objective
unchanged. Across five models from 1.5B to 14B and three mathematical
reasoning datasets, SpecRoll achieves 1.26$\times$--2.15$\times$ generation
speedup and 1.21$\times$--2.04$\times$ end-to-end speedup over vanilla GRPO.
It also outperforms FastGRPO in both generation and end-to-end time across all
15 matched settings, with an average pairwise end-to-end gain of
1.18$\times$. Controlled ablations show that the fast and slow adaptation
paths provide complementary benefits. Our source code is available at \url{https://anonymous.4open.science/r/SpecRoll-26062006}.
\end{abstract}

\section{Introduction}
\label{sec:intro}

Reinforcement learning with verifiable rewards has become a standard
approach for improving mathematical reasoning in large language models.
Group Relative Policy Optimization (GRPO) is particularly attractive
because it forms relative advantages from groups of sampled responses
without requiring a learned value model
\citep{shao2024deepseekmath,deepseek2025r1}.
However, each update still requires multiple long autoregressive
responses, making rollout generation a major wall-clock bottleneck.

\begin{figure}[t]
    \centering
    \includegraphics[width=\linewidth]{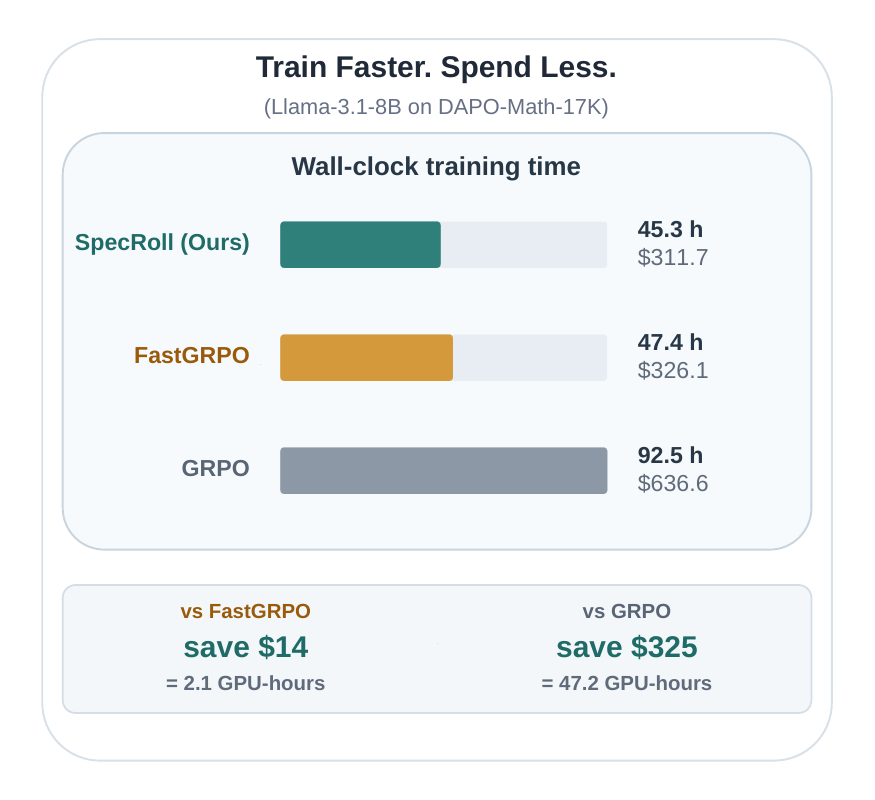}
    \caption{\textbf{SpecRoll delivers plug-and-play rollout acceleration
    for RL.} On a single NVIDIA B200 at \$6.88 per GPU-hour, SpecRoll
    saves approximately \$14 and \$325 per run relative to FastGRPO and
    GRPO, respectively.}
    \label{fig:placeholder}
\end{figure}

Pipeline systems such as PipelineRL improve utilization by overlapping
generation and optimization through asynchronous execution and in-flight
weight updates \citep{piche2025pipelinerl}. Such methods reduce
inter-stage idle time but not the token-by-token cost of generating each
response. Decoder acceleration is therefore complementary: it reduces
the rollout work itself rather than hiding it across pipeline stages.

Speculative decoding attacks this cost by proposing multiple future
tokens and verifying them in parallel. Exact target correction preserves
the target sampling distribution, but applying speculation to RL presents
two challenges. First, rollout concurrency decreases as responses finish,
making a fixed verification tree inefficient across the full rollout.
Second, policy updates continually shift the target distribution, causing
a static proposer to become stale. FastGRPO addresses these challenges
through concurrency-aware verification and online training of a standalone
EAGLE-style drafter \citep{zhang2025fastgrpo}. While effective, this
approach requires backward computation, optimizer state, and
synchronization for an additional autoregressive model, and does not
distinguish transient local mismatch from persistent policy drift.

We explore a different design based on the observation that delayed
verifier errors can remain predictive over nearby positions within a
trajectory. This motivates \textbf{Reflex}, a gradient-free fast path
that converts mature verifier feedback into bounded, trajectory-local
hidden-state corrections. Reflex activates only when a conservative
alignment gate indicates that past errors predict future ones, with
candidate-boundary safeguards suppressing harmful interventions.

Because local memory cannot absorb recurring drift, \textbf{SpecRoll}
adapts at two timescales: Reflex corrects transient mismatch without
backpropagation, while a slow path updates persistent proposal parameters
only after sustained degradation is detected. SpecRoll implements this
design using lightweight future-token heads that predict multiple horizons
from the target hidden state, avoiding a separate autoregressive drafter
and KV cache. Their proposals form a concurrency-aware sparse tree whose
budget increases as active concurrency decreases. Exact target verification
remains authoritative, preserving the rollout distribution and GRPO
objective.

Across five models from 1.5B to 14B and three mathematical reasoning
datasets, SpecRoll achieves 1.26$\times$--2.15$\times$ generation
speedup and 1.21$\times$--2.04$\times$ end-to-end speedup over vanilla
GRPO. It outperforms FastGRPO in both generation and end-to-end time
across all 15 matched model--dataset settings, with an average pairwise
end-to-end gain of 1.18$\times$. Controlled ablations show that the fast
and slow adaptation paths are individually beneficial and strongest when
combined.

Our main contributions are:
\begin{itemize}
    \item We introduce \textbf{Reflex} and a two-timescale adaptation
    mechanism that corrects transient proposal mismatch without
    backpropagation and reserves persistent updates for sustained drift.

    \item We develop \textbf{SpecRoll}, a lightweight exact speculative
    rollout framework combining future-token heads, Reflex, selective
    persistent adaptation, and concurrency-aware verification. Experiments
    across five models and three datasets demonstrate consistent
    end-to-end gains and validate the complementary roles of the two
    adaptation timescales.
\end{itemize}

\begin{figure*}[t]
    \centering
    \includegraphics[width=\textwidth]{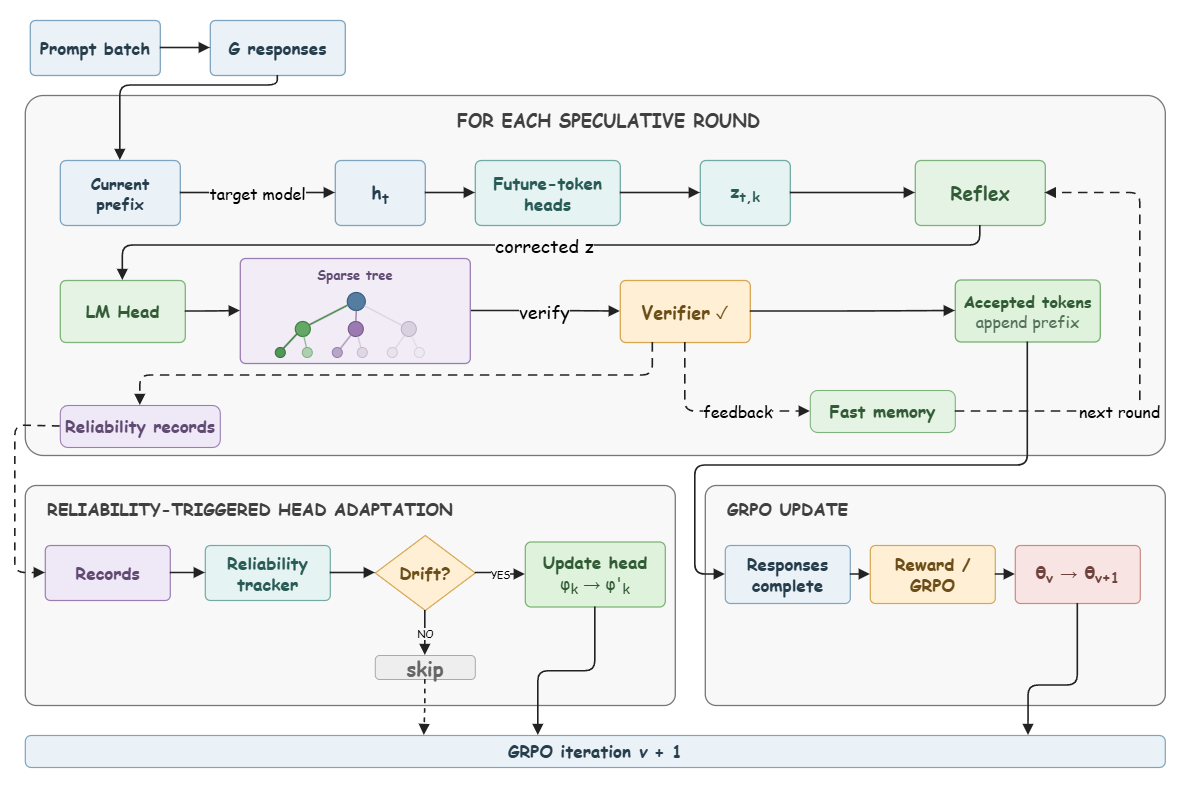}
    \caption{Overview of \textbf{SpecRoll} across one GRPO iteration.
    Future-token heads construct proposals from the current target hidden
    state, and Reflex optionally applies trajectory-local corrections
    before sparse-tree verification. Mature verifier feedback updates the
    fast memory and reliability records, while persistent head parameters
    are updated only after sustained drift is detected. Reward computation
    and the GRPO policy update remain unchanged.}
    \label{fig:pipeline}
\end{figure*}
\section{Related Work}
\label{sec:related}

\paragraph{Efficient RL post-training systems.}
DeepSpeed-Chat and OpenRLHF improve RL post-training through integrated
training--inference optimizations, scheduling, and resource utilization
\citep{yao2023deepspeedchat,hu2024openrlhf}.
ReaL dynamically reallocates parameters and parallelization strategies,
HybridFlow supports flexible orchestration and efficient resharding, and
PipelineRL overlaps rollout generation with optimization
\citep{mei2024real,sheng2025hybridflow,piche2025pipelinerl}.
These systems mainly reduce communication, placement, resharding, and
inter-stage idle costs. SpecRoll instead reduces sequential target-model
decoding within rollout generation, making decoder-level acceleration
complementary to system-level optimization.

\paragraph{Speculative decoding and online adaptation.}
Classical speculative decoding combines a lightweight autoregressive
drafter with exact target correction
\citep{leviathan2023speculative,chen2023speculative}.
Medusa introduces parallel future-token heads, while EAGLE and HASS
improve feature prediction, draft-tree construction, and target alignment
\citep{cai2024medusa,li2024eagle,li2024eagle2,
li2025eagle3,zhang2025hass}.
Online Speculative Decoding, OnlineSpec, and Test-Time Speculation reuse
verifier feedback to adapt proposers under changing requests or targets
\citep{liu2024online,qian2026onlinespec,kumar2026tts},
primarily through persistent parameter optimization. SpecRoll instead
uses \textbf{Reflex} for immediate, trajectory-local hidden-state
correction without backpropagation, reserving parameter updates for
sustained degradation.

\paragraph{Speculative decoding for RL rollouts.}
FastGRPO combines concurrency-aware verification with continuous online
training of a standalone EAGLE-style drafter
\citep{zhang2025fastgrpo}.
RLHFSpec couples workload-aware drafting with sample reallocation, while
other work integrates speculation into synchronous and asynchronous RL
pipelines
\citep{wang2025rlhfspec,iso2026system}.
Distribution-Aware Speculative Decoding builds a nonparametric
suffix-tree drafter from previous rollouts, whereas EfficientRollout
derives a quantized self-drafter from the target
\citep{shao2026das,kim2026efficientrollout}.
SpecRoll focuses on lightweight future-token heads and separates
adaptation across two timescales: gated, gradient-free memory handles
transient mismatch, while recurring drift is selectively consolidated
into persistent parameters. Exact target verification remains
authoritative, preserving the rollout distribution and GRPO objective.

\section{Method}
\label{sec:method}

\subsection{Problem Formulation}
\label{sec:problem-formulation}

At optimization stage $s$, GRPO samples response groups from the current
policy $\pi_{\theta_s}$ and updates its parameters using within-group
relative advantages \citep{shao2024deepseekmath}. We consider an exact
speculative proposal engine $\mathcal E$ within this sampling stage. Let
$A_t$ denote the accepted progress at verification round $t$. An ideal
engine should provide substantial accepted progress at low computational
cost while preserving the target rollout distribution:
\begin{equation}
    \max_{\mathcal E}\quad
    \frac{\mathbb E[A_t]}
         {\operatorname{Cost}(\mathcal E)}
    \qquad
    \text{subject to}\qquad
    X\sim\pi_{\theta_s}.
    \label{eq:design-objective}
\end{equation}
Thus, useful speculation must jointly maintain low proposal overhead,
sufficient acceptance, and unbiased target sampling.

Finding an optimal proposal engine online is generally intractable and
could itself dominate the rollout critical path. Instead,
\textbf{SpecRoll} starts from a lightweight proposal family and asks:
\emph{how can the errors of an inexpensive proposer be corrected with
minimal delay, while reserving persistent parameter updates for mismatch
that genuinely survives across trajectories?} This question is especially
important in GRPO, where successive updates continually move
$\pi_{\theta_s}$ and make proposal accuracy inherently non-stationary.

Figure~\ref{fig:pipeline} summarizes the two-timescale adaptation design
within the overall GRPO training loop.
\textbf{SpecRoll} instantiates the proposal engine with lightweight
MEDUSA-style future-token heads that produce a sparse draft tree.
Concurrency-aware construction adapts this tree within each verification
round; gradient-free \textbf{Reflex} corrects transient errors within a
trajectory; and a slow path consolidates persistent mismatch across
trajectories. Exact target verification remains authoritative throughout,
so these mechanisms improve rollout efficiency without changing the
committed-token distribution, rewards, or GRPO objective.

\subsection{Concurrency-Aware Exact Rollout}
\label{sec:exact-rollout}

At prefix $x_{\le t}$, the target backbone produces
\begin{equation}
\begin{aligned}
    h_t &= f_{\theta_s}(x_{\le t}),\\
    p_t &= \operatorname{softmax}(Wh_t),
    \qquad x_{t+1}\sim p_t .
\end{aligned}
\end{equation}
where $x_{t+1}$ is an exact target-sampled anchor. From the same hidden
state, $H$ lightweight horizon heads predict subsequent positions in
parallel:
\begin{equation}
\begin{aligned}
    z_{t,h} &= g_{\phi_h}(h_t)+h_t,\\
    q_{t,h} &= \operatorname{softmax}(Wz_{t,h}),
\end{aligned}
\qquad h=1,\ldots,H .
\label{eq:heads}
\end{equation}
where head $h$ predicts $x_{t+h+1}$ and reuses the target vocabulary
projection $W$. Their candidates form a prefix-closed sparse tree
$\mathcal T_t$ rooted at the anchor, avoiding a second autoregressive
backbone and drafter KV cache.

Low proposal cost alone does not guarantee efficient rollout. Under the
high and varying concurrency of GRPO, a fixed verification tree may add
unnecessary work when many responses are active and underuse the hardware
as responses finish \citep{zhang2025fastgrpo}. Let $B_t$ be the number of
unfinished responses and $C_{\mathrm{hw}}$ the profiled verification
capacity of the model--hardware pair. SpecRoll assigns each response the
node budget
\begin{equation}
    N_t=
    \operatorname{clip}\!\left(
        \left\lfloor\frac{C_{\mathrm{hw}}}{B_t}\right\rfloor,
        N_{\min},N_{\max}
    \right).
    \label{eq:verification-budget}
\end{equation}
If $n_{t,h}$ nodes are allocated to horizon $h$, the resulting tree
satisfies
\begin{equation}
    |\mathcal T_t|
    =
    1+\sum_{h=1}^{H}n_{t,h}
    \le N_t,
    \label{eq:tree-budget}
\end{equation}
where the anchor is counted once. The ratio
$C_{\mathrm{hw}}/B_t$ captures the batch-wide envelope
$B_t|\mathcal T_t|\le C_{\mathrm{hw}}$ within the profiled operating
range.

The tree determines only which target conditionals are evaluated
together. At every visited node, exact rejection sampling and
target-residual fallback remain authoritative. Consequently,
concurrency-aware construction changes verification cost and accepted
progress, but not the committed-token distribution.

\subsection{Verifier-Guided Proposal Error Analysis}
\label{sec:error-analysis}

\begin{figure}[t]
    \centering
    \includegraphics[width=0.9\columnwidth]{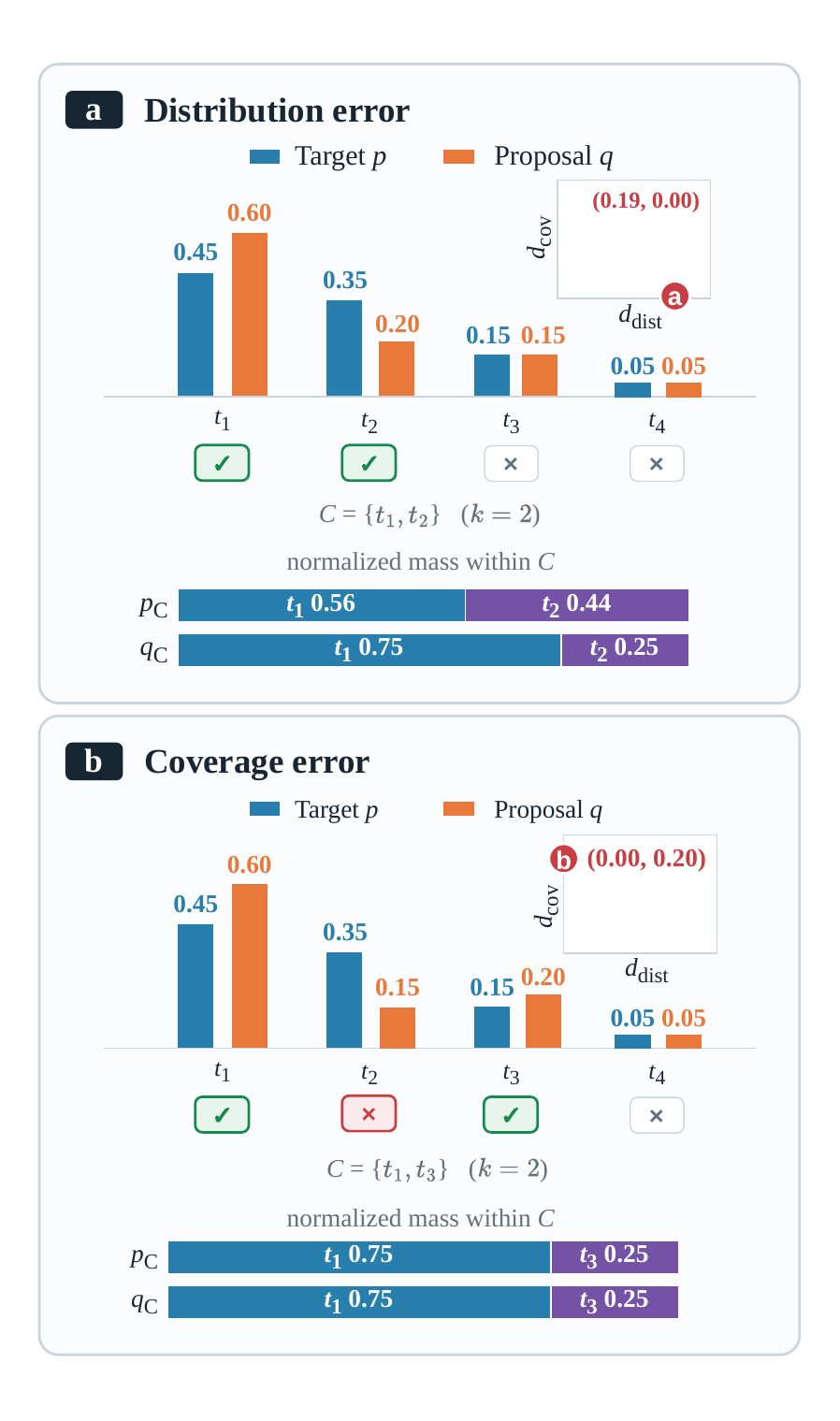}
    \caption{
        Two proposal errors under a fixed candidate budget.
        \textbf{Top:} The correct candidates are selected, but the proposal
        mass is misallocated, producing distribution error.
        \textbf{Bottom:} A target-important token is omitted, producing
        coverage error.
    }
    \label{fig:error-decomposition}
\end{figure}

When a horizon-$h$ proposal matures, exact verification exposes the target
distribution $p$ alongside the retained proposal distribution $q$. We use
this feedback to analyze proposal quality in terms of two complementary
sources of acceptance loss: distribution error and coverage error.

\paragraph{Conservative acceptance surrogate.}
Consider a matured proposal node with candidate set
$C\subseteq\mathcal{V}$, where $|C|=K$. The maximum target mass attainable
under the same candidate budget is achieved by
\begin{equation}
    C_K^\star
    \in
    \arg\max_{\substack{A\subseteq\mathcal{V}\\|A|\leq K}} p(A)
    =
    \operatorname{Top}_{K}(p).
    \label{eq:optimal-candidates}
\end{equation}
For $p(C),q(C)>0$, define the conditional target and proposal distributions
on $C$ as
\begin{equation}
    p_C(v)=\frac{p(v)}{p(C)},
    \qquad
    q_C(v)=\frac{q(v)}{q(C)},
    \qquad v\in C.
    \label{eq:conditional-distributions}
\end{equation}
Their overlap coefficient is
\begin{equation}
    \Omega_C
    =
    \sum_{v\in C}\min\{p_C(v),q_C(v)\}
    =
    1-\operatorname{TV}(p_C,q_C),
    \label{eq:conditional-overlap}
\end{equation}
where
\begin{equation}
    \operatorname{TV}(p_C,q_C)
    =
    \frac{1}{2}\sum_{v\in C}
    \left|p_C(v)-q_C(v)\right|.
\end{equation}
Because the realized multi-candidate acceptance probability is difficult to
decompose directly, we use the following local surrogate. As shown in
Appendix~\ref{app:acceptance-surrogate}, if $A_C^{\mathrm{ver}}$ denotes the
realized node-wise verifier acceptance probability, then
\begin{equation}
\begin{aligned}
    \widetilde{A}_C
    &:=
    p(C)\Omega_C \\
    &=
    p(C)\left[
        1-\operatorname{TV}(p_C,q_C)
    \right] \\
    &\leq
    A_C^{\mathrm{ver}}
    \leq
    p(C).
\end{aligned}
\label{eq:acceptance-approximation}
\end{equation}
Thus, $\widetilde{A}_C$ is a conservative local acceptance surrogate that
captures both the target mass covered by $C$ and the proposal--target
agreement within that support.

\paragraph{Distribution--coverage decomposition.}
Under this surrogate, the gap from the budget-matched target ceiling
decomposes exactly as
\begin{equation}
\begin{aligned}
    p(C_K^\star)-\widetilde{A}_C
    &=
    \underbrace{
        p(C)\operatorname{TV}(p_C,q_C)
    }_{d_{\mathrm{dist}}}
    \\[-2pt]
    &\quad+
    \underbrace{
        p(C_K^\star)-p(C)
    }_{d_{\mathrm{cov}}}.
\end{aligned}
\label{eq:acceptance-gap-decomposition}
\end{equation}
Here, $d_{\mathrm{dist}}$ measures probability misallocation among the
retained candidates, whereas $d_{\mathrm{cov}}$ measures target mass lost
by omitting better candidates under the same budget.
Figure~\ref{fig:error-decomposition} illustrates these complementary
failure modes: the top panel isolates misallocation within the correct
candidate set, while the bottom panel isolates missing target-important
tokens despite correct allocation within the retained set.

Motivated by this analysis, the next subsection converts these signals into
the corresponding hidden-space corrections $r^{\mathrm{dist}}$ and
$r^{\mathrm{cov}}$, which form the core design of Reflex.
\subsection{Reflex: Immediate Trajectory-Local Correction}
\label{sec:reflex}

Reflex converts the two proposal errors into temporary hidden-space feedback
that is reused only at later prefixes of the same trajectory. The proposal-head
parameters remain unchanged.

\paragraph{Verifier feedback direction.}
Consider a horizon-$h$ proposal created at position $\tau$ of trajectory $i$.
For feedback computation, we use the common sparse support
\begin{equation}
    S
    =
    C
    \cup \topk_k(q)
    \cup \topk_k(p)
    \cup \{y^\star\},
    \label{eq:support}
\end{equation}
where $k$ is the support budget and $y^\star$ is the realized target token.
Let $p_S$ and $q_S$ be the target and proposal distributions renormalized on
$S$, and let $W_S$ contain the corresponding output-projection rows.
Distribution feedback is
\begin{equation}
    r_{i,\tau,h}^{\mathrm{dist}}
    =
    W_S^\top(p_S-q_S),
    \label{eq:distdir}
\end{equation}
which moves the retained-token logits toward the target. For coverage, let
$O=C_K^\star\setminus C$ contain the omitted target-important tokens and let
$b^\star$ be the weakest retained candidate. We use
\begin{equation}
\begin{aligned}
    b^\star
    &=\arg\min_{b\in C}q(b),\\
    r_{i,\tau,h}^{\mathrm{cov}}
    &=\sum_{y\in O}p_S(y)
      \bigl(W_y-W_{b^\star}\bigr),
\end{aligned}
    \label{eq:covdir}
\end{equation}
which promotes omitted tokens relative to the top-$K$ selection boundary.

We RMS-normalize each direction before combining them:
\begin{equation}
    \overline r_{i,\tau,h}^{\,x}
    =
    \frac{r_{i,\tau,h}^{x}}
         {\operatorname{RMS}(r_{i,\tau,h}^{x})+\epsilon},
    \qquad
    x\in\{\mathrm{dist},\mathrm{cov}\},
\end{equation}
where $\epsilon>0$ is a numerical stabilizer. The unified feedback is
\begin{align}
    g_{i,\tau,h}
    &=
    d_{\mathrm{dist}}\,
    \overline r_{i,\tau,h}^{\,\mathrm{dist}}
    +
    d_{\mathrm{cov}}\,
    \overline r_{i,\tau,h}^{\,\mathrm{cov}},
    \nonumber\\
    e_{i,\tau,h}
    &=
    \bigl(d_{\mathrm{dist}}+d_{\mathrm{cov}}\bigr)
    \frac{g_{i,\tau,h}}
         {\operatorname{RMS}(g_{i,\tau,h})+\epsilon}.
    \label{eq:unified-feedback}
\end{align}
Thus, $e_{i,\tau,h}$ combines both correction directions $d_{\mathrm{dist}}+d_{\mathrm{cov}}$.

\paragraph{Trajectory-local recurrent memory.}
When a proposal record matures, Reflex updates its trajectory--horizon memory:
\begin{equation}
    m_{i,h}
    \leftarrow
    \rho_h m_{i,h}
    +(1-\rho_h)e_{i,\tau,h},
    \qquad 0\leq\rho_h<1,
    \label{eq:ema}
\end{equation}
where $\rho_h$ controls decay. Consistent errors reinforce one another, while
inconsistent directions cancel; the memory is reset when trajectory $i$ ends.

\paragraph{Reliability from delayed alignment.}
Because the current verifier error is delayed, Reflex estimates whether the
memory remains useful from earlier matured records. At proposal time, it stores a normalized memory sketch \(s_j\). Once record \(j\) matures, it computes the realized alignment \(a_j=\cos(s_j,\xi_j)\) against the normalized verifier-feedback sketch \(\xi_j\), where \(a_j>0\) means that the memory predicted the subsequent error direction.

For each trajectory--horizon pair $(i,h)$, let $n_{i,h}$, $\bar a_{i,h}$,
and $s_{a,i,h}$ be the count, mean, and sample standard deviation of matured
alignments. With sufficient observations, the conservative reliability score is
\begin{equation}
    \widehat{\operatorname{Rel}}_{i,h}
    =
    \bar a_{i,h}
    -
    z_\delta
    \frac{s_{a,i,h}}{\sqrt{n_{i,h}}},
    \label{eq:reliability-score}
\end{equation}
where $z_\delta>0$ penalizes noisy evidence.

\paragraph{Gated hidden-state correction.}
At a later prefix position $t$, let $z_{i,t,h}$ be the uncorrected state
produced by horizon head $h$. Reflex applies the normalized memory only when
its reliability lower bound is positive:
\begin{equation}
\begin{aligned}
    u_{i,h}
    &=\frac{m_{i,h}}
            {\operatorname{RMS}(m_{i,h})+\epsilon},\\
    \Delta z_{i,t,h}
    &=\mathbf 1\!\left\{
       \widehat{\operatorname{Rel}}_{i,h}>0\right\}
      \alpha_{i,t,h}\operatorname{RMS}(z_{i,t,h})u_{i,h},\\
    \widetilde z_{i,t,h}
    &=z_{i,t,h}+\Delta z_{i,t,h}.
\end{aligned}
    \label{eq:reflex-correction}
\end{equation}
Here $\alpha_{i,t,h}$ is bounded, and relative-RMS scaling keeps the correction
proportional to the current state magnitude.

Reflex abstains when the memory is negligible, insufficient alignment
observations have matured, or the reliability score is non-positive. It uses
only detached feedback from earlier verified proposals, requiring no additional
target forward or backward pass; exact target verification remains
authoritative. Appendix~\ref{app:implementation} gives the practical checks,
while Appendices~\ref{app:proofs} and~\ref{app:gatecalibration} analyze the
feedback directions and reliability gate.

\begin{table*}[t]
\centering
\small
\setlength{\tabcolsep}{2pt}
\renewcommand{\arraystretch}{1.04}

\begin{tabular*}{\textwidth}{
    @{\extracolsep{\fill}}
    lll
    cc cc cc cc
    @{}
}
\toprule
& & &
\multicolumn{2}{c}{GSM8K} &
\multicolumn{2}{c}{SimpleRL L3--5} &
\multicolumn{2}{c}{DAPO-Math-17K} &
\multicolumn{2}{c}{Average} \\
\cmidrule(lr){4-5}
\cmidrule(lr){6-7}
\cmidrule(lr){8-9}
\cmidrule(l){10-11}

Model & Measure & Metric
& FastGRPO & \textbf{SpecRoll}
& FastGRPO & \textbf{SpecRoll}
& FastGRPO & \textbf{SpecRoll}
& FastGRPO & \textbf{SpecRoll} \\
\midrule

\multirow{4}{*}{Qwen2.5-1.5B}
& \multirow{2}{*}{Speedup}
& Gen. & 1.33 & \textbf{1.59}
       & 1.07 & \textbf{1.29}
       & 1.27 & \textbf{1.46}
       & 1.22 & \textbf{1.45} \\
&
& E2E  & 1.30 & \textbf{1.53}
       & 1.02 & \textbf{1.24}
       & 1.25 & \textbf{1.42}
       & 1.19 & \textbf{1.40} \\
\cmidrule(lr){2-11}
& \multirow{2}{*}{Acceptance}
& AAL  & 1.692 & 1.540
       & 1.796 & 1.526
       & 1.137 & 1.503
       & 1.542 & 1.523 \\
&
& Acc. & 0.086 & 0.111
       & 0.082 & 0.102
       & 0.108 & 0.101
       & 0.092 & 0.105 \\
\midrule

\multirow{4}{*}{Qwen2.5-3B}
& \multirow{2}{*}{Speedup}
& Gen. & 1.40 & \textbf{1.60}
       & 1.29 & \textbf{1.46}
       & 1.22 & \textbf{1.29}
       & 1.30 & \textbf{1.45} \\
&
& E2E  & 1.35 & \textbf{1.54}
       & 1.22 & \textbf{1.42}
       & 1.20 & \textbf{1.26}
       & 1.26 & \textbf{1.41} \\
\cmidrule(lr){2-11}
& \multirow{2}{*}{Acceptance}
& AAL  & 2.240 & 1.730
       & 1.779 & 1.726
       & 1.950 & 1.640
       & 1.990 & 1.699 \\
&
& Acc. & 0.073 & 0.092
       & 0.065 & 0.093
       & 0.081 & 0.082
       & 0.073 & 0.089 \\
\midrule

\multirow{4}{*}{Qwen2.5-7B}
& \multirow{2}{*}{Speedup}
& Gen. & 1.08 & \textbf{1.94}
       & 1.24 & \textbf{1.49}
       & 1.38 & \textbf{1.53}
       & 1.23 & \textbf{1.65} \\
&
& E2E  & 1.06 & \textbf{1.85}
       & 1.20 & \textbf{1.43}
       & 1.33 & \textbf{1.46}
       & 1.20 & \textbf{1.58} \\
\cmidrule(lr){2-11}
& \multirow{2}{*}{Acceptance}
& AAL  & 1.989 & 1.920
       & 2.150 & 1.820
       & 2.124 & 1.850
       & 2.088 & 1.863 \\
&
& Acc. & 0.060 & 0.113
       & 0.096 & 0.103
       & 0.101 & 0.107
       & 0.086 & 0.108 \\
\midrule

\multirow{4}{*}{Qwen2.5-14B}
& \multirow{2}{*}{Speedup}
& Gen. & 1.15 & \textbf{1.46}
       & 1.21 & \textbf{1.49}
       & 1.95 & \textbf{2.09}
       & 1.44 & \textbf{1.68} \\
&
& E2E  & 1.14 & \textbf{1.43}
       & 1.19 & \textbf{1.45}
       & 1.87 & \textbf{1.99}
       & 1.40 & \textbf{1.62} \\
\cmidrule(lr){2-11}
& \multirow{2}{*}{Acceptance}
& AAL  & 2.050 & 2.110
       & 2.110 & 1.970
       & 1.954 & 1.875
       & 2.038 & 1.985 \\
&
& Acc. & 0.081 & 0.124
       & 0.078 & 0.115
       & 0.074 & 0.102
       & 0.078 & 0.114 \\
\midrule

\multirow{4}{*}{Llama-3.1-8B}
& \multirow{2}{*}{Speedup}
& Gen. & 1.18 & \textbf{1.38}
       & 1.15 & \textbf{1.26}
       & 1.98 & \textbf{2.15}
       & 1.44 & \textbf{1.60} \\
&
& E2E  & 1.16 & \textbf{1.33}
       & 1.13 & \textbf{1.21}
       & 1.95 & \textbf{2.04}
       & 1.41 & \textbf{1.53} \\
\cmidrule(lr){2-11}
& \multirow{2}{*}{Acceptance}
& AAL  & 1.927 & 1.840
       & 1.380 & 1.805
       & 1.890 & 1.623
       & 1.732 & 1.756 \\
&
& Acc. & 0.055 & 0.108
       & 0.068 & 0.093
       & 0.061 & 0.076
       & 0.061 & 0.092 \\
\bottomrule
\end{tabular*}

\caption{
Pairwise comparison between FastGRPO and \textbf{SpecRoll}.
Gen.\ and E2E report generation and end-to-end speedups over matched
vanilla-GRPO runs; bold denotes higher speedup. AAL is the number of
accepted speculative tokens per target-verification round, and Acc.\ is
the fraction of tested speculative candidates accepted by the verifier.
Averages are arithmetic means across the three datasets.
}
\label{tab:main}
\end{table*}

\begin{table*}[t]
\centering
\scriptsize
\setlength{\tabcolsep}{0.8pt}
\renewcommand{\arraystretch}{1.08}
\begin{tabular*}{\textwidth}{@{\extracolsep{\fill}}l*{20}{c}@{}}
\toprule
& \multicolumn{4}{c}{FastGRPO}
& \multicolumn{4}{c}{Heads only}
& \multicolumn{4}{c}{+ Reflex}
& \multicolumn{4}{c}{+ Aux}
& \multicolumn{4}{c}{Full SpecRoll} \\
\cmidrule(lr){2-5}
\cmidrule(lr){6-9}
\cmidrule(lr){10-13}
\cmidrule(lr){14-17}
\cmidrule(l){18-21}
Model
& Gen. & E2E & AAL & AR
& Gen. & E2E & AAL & AR
& Gen. & E2E & AAL & AR
& Gen. & E2E & AAL & AR
& Gen. & E2E & AAL & AR \\
\midrule
1.5B
& 1.07 & 1.02 & \textbf{1.796} & 0.082
& 1.07 & 1.06 & 1.439 & 0.085
& 1.21 & 1.19 & 1.473 & 0.089
& 1.13 & 1.12 & 1.468 & 0.088
& \textbf{1.29} & \textbf{1.24} & 1.526 & \textbf{0.102} \\
3B
& 1.29 & 1.22 & \textbf{1.779} & 0.065
& 1.19 & 1.17 & 1.570 & 0.071
& 1.39 & 1.35 & 1.617 & 0.078
& 1.28 & 1.25 & 1.584 & 0.072
& \textbf{1.46} & \textbf{1.42} & 1.726 & \textbf{0.093} \\
7B
& 1.24 & 1.20 & \textbf{2.150} & 0.096
& 1.21 & 1.18 & 1.610 & 0.076
& 1.37 & 1.32 & 1.623 & 0.087
& 1.24 & 1.21 & 1.617 & 0.081
& \textbf{1.49} & \textbf{1.43} & 1.820 & \textbf{0.103} \\
14B
& 1.21 & 1.19 & \textbf{2.110} & 0.078
& 1.18 & 1.16 & 1.740 & 0.083
& 1.40 & 1.36 & 1.780 & 0.089
& 1.31 & 1.27 & 1.750 & 0.086
& \textbf{1.49} & \textbf{1.45} & 1.970 & \textbf{0.115} \\
\bottomrule
\end{tabular*}
\caption{FastGRPO and the four-way SpecRoll ablation on Qwen2.5 models using SimpleRL-Abel-Level3to5. The four SpecRoll variants share the pretrained future-token heads, tree budget, verifier, and target configuration; FastGRPO is included as an external drafter-based reference. Gen. and E2E are speedups over matched vanilla GRPO, AR denotes acceptance rate, and bold marks the best result for each model and metric.}
\label{tab:ablation}
\end{table*}

\subsection{Drift-Triggered Persistent Consolidation}

Reflex adapts within a trajectory, whereas mismatch that recurs across
trajectories should update the proposal heads. For each horizon head $h$,
SpecRoll maintains a bounded reservoir $\mathcal R_h$ of matured verifier
records. It compares the current sparse proposal--target discrepancy and
acceptance rate with calibrated per-head baselines, and updates a head only
when both degradation signals persist across repeated checks. Evidence
thresholds, cooldown, and head selection are specified in
Appendix~\ref{app:implementation}.

For a selected head, a record $j\in\mathcal R_h$ provides a sparse support
$S_j$, a target distribution $p_{S_j,j}$, and a head distribution
$q_{S_j,j}(\phi_h)$ renormalized on the same support, together with the
realized target token $y_j^\star$. SpecRoll minimizes
\begin{equation}
\begin{aligned}
    \mathcal L_{\mathrm{aux}}(\phi_h)
    ={}&\mathbb E_{j\sim\mathcal R_h}\Bigl[
        \lambda_{\mathrm{dist}}\,
        \ell_{\mathrm{dist}}
        \!\left(
            p_{S_j,j},
            q_{S_j,j}(\phi_h)
        \right)\\
        &\qquad\qquad
        +\lambda_{\mathrm{tok}}\,
        \bigl[-\log q_j(y_j^\star;\phi_h)\bigr]
        \Bigr]\\
        &\quad+\lambda_{\mathrm{prox}}\,
        \Omega(\phi_h,\phi_h^{\mathrm{before}}).
\label{eq:aux}
\end{aligned}
\end{equation}
Here
$\ell_{\mathrm{dist}}(p,q)=D_{\mathrm{KL}}(p\Vert q)$ is the
restricted-support distillation loss, whereas
$q_j(y_j^\star;\phi_h)$ denotes the probability of the realized
token under the head's full-vocabulary proposal distribution. The nonnegative
$\lambda$ coefficients weight the three terms, while
$\Omega(\phi_h,\phi_h^{\mathrm{before}})$ penalizes departure from the
pre-update parameters. Only the heads are updated, and the target backbone,
vocabulary projection, and GRPO policy are fixed and receive no gradient from
$\mathcal L_{\mathrm{aux}}$.

\section{Experiment}
\label{sec:experiments}
We evaluate SpecRoll through four research questions:
\textbf{RQ1} Can it consistently reduce rollout-generation and end-to-end
GRPO training time across model scales and reasoning datasets?
\textbf{RQ2} Do these gains result from more effective use of the proposal and
verification budget, rather than simply from producing larger trees or longer
accepted continuations?
\textbf{RQ3} Do trajectory-local Reflex correction and reliability-triggered
persistent consolidation provide complementary gains?


\subsection{Setup}
We evaluate Qwen2.5-1.5B, 3B, 7B, and 14B \citep{qwen2025qwen25}, together with Llama-3.1-8B \citep{grattafiori2024llama3}. Each model is trained on GSM8K \citep{cobbe2021gsm8k}, SimpleRL-Abel-Level3to5 \citep{zeng2025simplerl}, and DAPO-Math-17K \citep{yu2025dapo}. All runs use NVIDIA B200 accelerators.

We compare vanilla GRPO, FastGRPO, and four variants that share the same future-token heads and concurrency-aware verifier: \textbf{Heads only}, which disables both adaptation paths; \textbf{Heads + Reflex}, which enables only the gradient-free fast memory; \textbf{Heads + Aux}, which disables Reflex but retains reliability-triggered persistent head updates; and full \textbf{SpecRoll}, which combines both paths. The FastGRPO drafter and all \textbf{SpecRoll} variants are pretrained on the same ShareGPT-derived data. Within each model--dataset pair, all methods share the target checkpoint, prompt order, reward function, group size, sampling temperature, response limit, stopping rules, and target sampling implementation. Timings cover proposal construction, verification, feedback processing, and triggered auxiliary work. We report generation and end-to-end speedups, average acceptance length(AAL),
and token acceptance rate.

\subsection{Overall Training Efficiency}
\label{sec:overall_results}

\paragraph{RQ1: Does SpecRoll consistently accelerate GRPO training?}
Table~\ref{tab:main} compares SpecRoll with vanilla GRPO and
FastGRPO across 15 model--dataset settings. Relative to vanilla GRPO,
SpecRoll achieves 1.26$\times$--2.15$\times$ generation speedup and
1.21$\times$--2.04$\times$ end-to-end speedup, with averages of
1.57$\times$ and 1.51$\times$, respectively. By comparison, FastGRPO
averages 1.33$\times$ generation speedup and 1.29$\times$ end-to-end
speedup. SpecRoll is faster than FastGRPO on both measures in every setting,
although the magnitude of the gain varies across models and datasets rather
than increasing monotonically with model size.
Appendix~\ref{app:runtime-scaling} further shows that this advantage persists
throughout a representative training run rather than arising from isolated
timing fluctuations. Overall, SpecRoll improves both generation and
end-to-end efficiency in all 15 settings, showing that its rollout-level
gains survive proposal, verification, and adaptation overheads.

\paragraph{RQ2: Where do the efficiency gains come from?}
SpecRoll does not uniformly maximize average acceptance length. Instead, its
higher token acceptance rate indicates that a larger fraction of the tested
proposal budget contributes to accepted progress. It achieves a higher
acceptance rate than FastGRPO in 14 of the 15 settings, even though FastGRPO
occasionally attains a higher \aal. For example, on Qwen2.5-14B/SimpleRL,
FastGRPO obtains a higher \aal{} (2.110 versus 1.970) but a substantially
lower acceptance rate (0.0777 versus 0.115), while also remaining slower end
to end. This distinction arises because \aal{} measures committed progress
per verification round, whereas acceptance rate measures how efficiently the
tested candidates are converted into accepted tokens. Thus, the results
attribute SpecRoll's advantage to more effective use of its proposal budget,
rather than uniformly longer accepted continuations.
\subsection{Complementarity of Fast and Slow Adaptation}
\label{sec:component_ablation}

\paragraph{RQ3: Are fast and slow adaptation complementary?}
Table~\ref{tab:ablation} isolates the two adaptation paths while holding the
heads, verifier, and training setup fixed. Within the SpecRoll variants, either
Reflex or auxiliary adaptation improves the Heads-only baseline at every scale,
with Reflex providing the larger individual gain. Their combination is
consistently strongest on generation speedup, end-to-end speedup, and acceptance
rate. For Qwen2.5-14B, for example, end-to-end speedup rises from
1.16$\times$ with Heads only to 1.36$\times$ with Reflex and 1.27$\times$ with
Aux, while full SpecRoll reaches 1.45$\times$; it also raises AAL from 1.740 to
1.970 and acceptance rate from 0.083 to 0.115.

FastGRPO obtains higher AAL in all four settings, suggesting that its standalone
draft model can produce longer accepted continuations. Our claim is therefore
not that lightweight future-token heads always draft better. Rather, small heads
combined with Reflex and Aux remain effective enough to achieve higher
acceptance rates and end-to-end speedups in every row, while avoiding a separate
autoregressive drafter. The ablation supports the intended division of labor:
Reflex corrects immediate trajectory-local mismatch, whereas Aux consolidates
errors that persist across trajectories.
\section{Conclusion}
We introduced \textbf{SpecRoll}, an exact speculative rollout engine for GRPO that combines lightweight future-token heads with reliability-gated fast--slow adaptation. Reflex converts delayed verifier errors into temporary hidden-space corrections without backward computation, while a separate slow path absorbs persistent mismatch into the head parameters. Across five models and three reasoning datasets, SpecRoll achieves 1.26$\times$--2.15$\times$ generation speedup and 1.21$\times$--2.04$\times$ end-to-end speedup over vanilla GRPO, and it is faster in both generation and end-to-end time than FastGRPO in all 15 matched settings. Controlled ablations show that Reflex-only and auxiliary-only variants both improve the same future-token-head baseline, while their combination gives the strongest throughput, AAL, and acceptance rate. These results support a precise novelty claim: verifier feedback is split into a gradient-free trajectory-local memory and a selectively triggered persistent learner rather than being used only for online parameter training.

\section*{Limitations}
Our experiments cover five models up to 14B and three mathematical reasoning
datasets. Broader evaluation on larger models, more challenging and diverse
tasks, and additional domains such as code and multilingual reasoning would
further establish the generality of SpecRoll. Runtime gains may also vary
across hardware platforms, execution stacks, concurrency settings, and
response-length distributions, motivating additional profiling and
system-specific tuning. Future work will study longer training runs and
larger-scale deployments. These extensions concern empirical coverage and
systems optimization; the exact target-verification semantics remain
unchanged.

\bibliography{custom}
\appendix

\section{Theoretical Analysis of Reflex}
\label{app:proofs}

This section establishes three results.  First, the instantaneous Reflex
feedback is represented as the negative gradient of a pointwise surrogate
with all data-dependent supports and weights held fixed.  Second, a
finite-sample bound is derived for an exponentially weighted average of
delayed feedback from a drifting predictable field; this bound yields a
sufficient condition for descent of a current verifier surrogate.  Third,
descent of a conditional KL divergence on the verifier support is related to
an explicit lower bound on first-slot acceptance.  All statements concerning
supports are conditional on the indicated supports remaining fixed.

\subsection{Pointwise surrogate representation}

Let $\mathcal V$ be a finite vocabulary, let
$W\in\mathbb R^{|\mathcal V|\times d_z}$, and denote by
$W_v\in\mathbb R^{d_z}$ the transpose of the row of $W$ indexed by
$v\in\mathcal V$.  Fix $z_0\in\mathbb R^{d_z}$, a nonempty support
$S\subseteq\mathcal V$, a distribution $p_S$ on $S$, and a temperature
$\tau>0$.  Define
\begin{align}
 q_S(z)&=\operatorname{softmax}(W_Sz/\tau),\notag\\
 \mathcal L^{\mathrm{dist}}_S(z)
 &=D_{\mathrm{KL}}\!\left(p_S\Vert q_S(z)\right),\notag\\
 r^{\mathrm{dist}}_S(z_0)
 &=W_S^\top\!\left(p_S-q_S(z_0)\right).
 \label{eq:analysis-dist-definitions}
\end{align}

Let $O,B\subseteq\mathcal V$ be finite sets with $B\ne\varnothing$.
Let $\omega_y\ge0$ for $y\in O$, and let $\beta_b\ge0$ for $b\in B$
satisfy $\sum_{b\in B}\beta_b=1$.  Define
\begin{align}
 \overline W_B&=\sum_{b\in B}\beta_bW_b,\notag\\
 r^{\mathrm{cov}}_{O,B}
 &=\sum_{y\in O}\omega_y\left(W_y-\overline W_B\right).
 \label{eq:analysis-cov-definition}
\end{align}
The coverage direction in Equation~\ref{eq:covdir} is recovered by taking
$\omega_y=p(y)$, $B=\{b^\star\}$, and $\beta_{b^\star}=1$.  If a
conditional target mass is used by the implementation instead, then
$\omega_y$ must be defined by that same mass in both
Equation~\ref{eq:analysis-cov-definition} and the surrogate below.

For $y\in O$, $b\in B$, and a fixed $\gamma\in\mathbb R$, set
\begin{align}
 c_{y,b}(z)&=\gamma+W_b^\top z-W_y^\top z,\notag\\
 \ell_{y,b}(z)&=\log\!\left(1+\exp(c_{y,b}(z))\right),\notag\\
 \kappa_{y,b}(z_0)
 &=\frac{\omega_y\beta_b}
 {\operatorname{sigmoid}(c_{y,b}(z_0))}.
 \label{eq:analysis-coverage-weights}
\end{align}
Since $c_{y,b}(z_0)$ is finite,
$\operatorname{sigmoid}(c_{y,b}(z_0))\in(0,1)$; hence every coefficient in
Equation~\ref{eq:analysis-coverage-weights} is finite and nonnegative.  The
coefficients, together with $O$, $B$, and $(\beta_b)_{b\in B}$, are treated as
constants with respect to $z$.  Define
\begin{equation}
 \mathcal L^{\mathrm{cov}}_{z_0}(z)
 =\sum_{y\in O}\sum_{b\in B}
 \kappa_{y,b}(z_0)\ell_{y,b}(z).
 \label{eq:analysis-coverage-surrogate}
\end{equation}

\begin{proposition}[Pointwise surrogate representation]
\label{prop:feedback-geometry}
At $z=z_0$,
\begin{align}
 r^{\mathrm{dist}}_S(z_0)
 &=-\tau\nabla_z\mathcal L^{\mathrm{dist}}_S(z_0),
 \label{eq:analysis-dist-gradient}\\
 r^{\mathrm{cov}}_{O,B}
 &=-\nabla_z\mathcal L^{\mathrm{cov}}_{z_0}(z_0).
 \label{eq:analysis-cov-gradient}
\end{align}
Consequently, for fixed $\lambda_d,\lambda_c\ge0$, the vector
\begin{equation}
 r=\lambda_dr^{\mathrm{dist}}_S(z_0)
   +\lambda_cr^{\mathrm{cov}}_{O,B}
 \label{eq:analysis-combined-direction}
\end{equation}
satisfies
\begin{equation}
 \begin{aligned}
 r&=-\nabla_z\Phi_{z_0}(z_0),\\
 \Phi_{z_0}(z)
 &=\lambda_d\tau\mathcal L^{\mathrm{dist}}_S(z)\\
 &\quad+\lambda_c\mathcal L^{\mathrm{cov}}_{z_0}(z).
 \end{aligned}
 \label{eq:analysis-local-surrogate}
\end{equation}
If $r\ne0$, then $r$ is a strict descent direction for $\Phi_{z_0}$ at
$z_0$.
\end{proposition}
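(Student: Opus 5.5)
The proof is a direct gradient computation in three parts: the distribution term, the coverage term, and then linearity plus a first-order argument for descent.

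\emph{Distribution term.} Write the logits as $\ell(z)=W_Sz/\tau$ and $q_S=\operatorname{softmax}(\ell)$. Expanding the divergence gives
\[
D_{\mathrm{KL}}(p_S\Vert q_S)=\sum_{v\in S}p_S(v)\log p_S(v)-\sum_{v\in S}p_S(v)\,\ell_v+\log\sum_{u\in S}e^{\ell_u}.
\]
The first sum does not depend on $z$. The softmax cross-entropy identity gives $\partial\mathcal L^{\mathrm{dist}}_S/\partial\ell=q_S-p_S$. By the chain rule with $\partial\ell/\partial z=W_S/\tau$,
\[
\nabla_z\mathcal L^{\mathrm{dist}}_S(z)=\tau^{-1}W_S^\top\bigl(q_S(z)-p_S\bigr).
\]
Evaluating at $z_0$ and multiplying by $-\tau$ yields Equation~\ref{eq:analysis-dist-gradient}.

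\emph{Coverage term.} The function $c\mapsto\log(1+e^c)$ has derivative $\operatorname{sigmoid}(c)$, and $\nabla_z c_{y,b}(z)=W_b-W_y$. Hence
\[
\nabla_z\ell_{y,b}(z)=\operatorname{sigmoid}\bigl(c_{y,b}(z)\bigr)\,(W_b-W_y).
\]
The weights $\kappa_{y,b}(z_0)$ are held constant in $z$, as the statement stipulates. Evaluating at $z_0$, the factor $\operatorname{sigmoid}(c_{y,b}(z_0))$ cancels against the denominator of $\kappa_{y,b}(z_0)$; this is legitimate because the sigmoid lies in $(0,1)$, so it is nonzero. We obtain
\[
\nabla_z\mathcal L^{\mathrm{cov}}_{z_0}(z_0)=\sum_{y\in O}\sum_{b\in B}\omega_y\beta_b\,(W_b-W_y).
\]
Using $\sum_{b\in B}\beta_b=1$, the sum over $b$ collapses to $\sum_{y\in O}\omega_y(\overline W_B-W_y)=-r^{\mathrm{cov}}_{O,B}$. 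This gives Equation~\ref{eq:analysis-cov-gradient}.

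\emph{Combination and descent.} Linearity of the gradient gives $r=-\nabla_z\Phi_{z_0}(z_0)$ for any fixed $\lambda_d,\lambda_c\ge0$. The function $\Phi_{z_0}$ is $C^1$, being a composition of smooth functions with fixed coefficients. Its directional derivative along $r$ is
\[
\nabla\Phi_{z_0}(z_0)^\top r=-\|r\|^2<0\quad\text{whenever } r\neq0.
\]
By Taylor's theorem, $\Phi_{z_0}(z_0+\eta r)<\Phi_{z_0}(z_0)$ for all sufficiently small $\eta>0$, so $r$ is a strict descent direction.

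The only delicate point is bookkeeping rather than analysis. The supports $S,O,B$, the weights $\omega,\beta$, and the coefficients $\kappa_{y,b}(z_0)$ must be frozen as constants, so that no derivative of $\kappa$ appears in the coverage gradient. With $\kappa$ frozen, the cancellation of the sigmoid holds exactly at $z_0$ and in general fails at any other $z$. This is precisely why the representation is pointwise.
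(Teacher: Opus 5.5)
Your proof is correct and follows the paper's argument essentially step for step. You use the softmax cross-entropy gradient $\tau^{-1}W_S^\top(q_S-p_S)$ for the distribution term. For the coverage term you cancel the sigmoid factor against the frozen $\kappa_{y,b}(z_0)$ and collapse the sum using $\sum_b\beta_b=1$. Linearity and $\nabla\Phi_{z_0}(z_0)^\top r=-\|r\|_2^2<0$ then give the descent claim. Your explicit first-order Taylor step, which turns the negative directional derivative into actual strict decrease, is only a minor refinement of the paper's final line.
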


\begin{proof}
The $z$-dependent term in $D_{\mathrm{KL}}(p_S\Vert q_S(z))$ is
\begin{equation}
 -\sum_{v\in S}p_S(v)\log q_S(v;z).
\end{equation}
The Jacobian of the temperature-scaled log-softmax gives
\begin{equation}
 \nabla_z\mathcal L^{\mathrm{dist}}_S(z_0)
 =\frac{1}{\tau}W_S^\top\left(q_S(z_0)-p_S\right),
\end{equation}
which proves Equation~\ref{eq:analysis-dist-gradient}.

For every $(y,b)\in O\times B$,
\begin{equation}
 \nabla_z\ell_{y,b}(z_0)
 =\operatorname{sigmoid}(c_{y,b}(z_0))(W_b-W_y).
\end{equation}
It follows from Equation~\ref{eq:analysis-coverage-weights} that
\begin{align}
 -\nabla_z\mathcal L^{\mathrm{cov}}_{z_0}(z_0)
 &=\sum_{y\in O}\sum_{b\in B}
   \omega_y\beta_b(W_y-W_b)\notag\\
 &=\sum_{y\in O}\omega_y
   \left(W_y-\sum_{b\in B}\beta_bW_b\right)\notag\\
 &=r^{\mathrm{cov}}_{O,B},
\end{align}
which proves Equation~\ref{eq:analysis-cov-gradient}.  Linearity of the
gradient proves Equation~\ref{eq:analysis-local-surrogate}.  Finally,
\begin{equation}
 \nabla_z\Phi_{z_0}(z_0)^\top r=-\|r\|_2^2<0
\end{equation}
whenever $r\ne0$.
\end{proof}

\begin{corollary}[RMS-normalized feedback]
\label{cor:normalized-feedback-geometry}
Suppose that the distribution and coverage directions are evaluated at
$z_0$, and that the nonnegative severity weights and RMS denominators in
Equation~\ref{eq:unified-feedback} are held fixed.  Then the pre-normalized
combined vector in Equation~\ref{eq:unified-feedback} has the form
\begin{equation}
 g=\lambda_dr^{\mathrm{dist}}_S(z_0)
   +\lambda_cr^{\mathrm{cov}}_{O,B}
\end{equation}
for some $\lambda_d,\lambda_c\ge0$.  Hence $g$ is the negative gradient at
$z_0$ of the surrogate in Equation~\ref{eq:analysis-local-surrogate}.  Any
strictly positive scalar multiple of $g$ is a strict descent direction for
that surrogate whenever $g\ne0$.
\end{corollary}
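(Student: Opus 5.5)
The plan is to reduce the corollary to Proposition~\ref{prop:feedback-geometry} by expanding the pre-normalized vector in Equation~\ref{eq:unified-feedback} and collecting the scalars. Write $g=d_{\mathrm{dist}}\,\overline r^{\,\mathrm{dist}}+d_{\mathrm{cov}}\,\overline r^{\,\mathrm{cov}}$. Substituting the RMS normalization gives
\begin{equation}
 g=\frac{d_{\mathrm{dist}}}{\operatorname{RMS}(r^{\mathrm{dist}})+\epsilon}\,r^{\mathrm{dist}}
 +\frac{d_{\mathrm{cov}}}{\operatorname{RMS}(r^{\mathrm{cov}})+\epsilon}\,r^{\mathrm{cov}} .
\end{equation}
I will therefore set $\lambda_d=d_{\mathrm{dist}}/(\operatorname{RMS}(r^{\mathrm{dist}})+\epsilon)$ and $\lambda_c=d_{\mathrm{cov}}/(\operatorname{RMS}(r^{\mathrm{cov}})+\epsilon)$. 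Each of these is finite and nonnegative: $d_{\mathrm{dist}}=p(C)\operatorname{TV}(p_C,q_C)\ge0$, $d_{\mathrm{cov}}=p(C_K^\star)-p(C)\ge0$ because $C_K^\star$ maximizes target mass among sets of size at most $K$, and each denominator is at least $\epsilon>0$. By hypothesis these scalars are held fixed, i.e.\ treated as constants independent of $z$.

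Next I will check that the directions match those in the proposition. Equation~\ref{eq:distdir} is $r^{\mathrm{dist}}_S(z_0)$ with $q_S=q_S(z_0)$, taking $\tau=1$ (or whichever temperature the implementation uses). Equation~\ref{eq:covdir} is $r^{\mathrm{cov}}_{O,B}$ with $B=\{b^\star\}$, $\beta_{b^\star}=1$ and $\omega_y=p_S(y)$, consistent with the remark after Equation~\ref{eq:analysis-cov-definition}.

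With these identifications, Proposition~\ref{prop:feedback-geometry} gives $g=-\nabla_z\Phi_{z_0}(z_0)$ directly. For any $c>0$ and $g\neq0$, we have $\nabla\Phi_{z_0}(z_0)^\top(cg)=-c\|g\|_2^2<0$. This covers $e$ itself, which is $g$ scaled by $(d_{\mathrm{dist}}+d_{\mathrm{cov}})/(\operatorname{RMS}(g)+\epsilon)$; that factor is strictly positive whenever $g\ne0$, since then $d_{\mathrm{dist}}+d_{\mathrm{cov}}>0$.

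The main obstacle is bookkeeping rather than analysis. The paper's $p_S(y)$ weights and $O$ must match the proposition's $\omega_y$ and $O$ exactly. Also, the degenerate cases where $p(C)=0$ or $q(C)=0$ (so $d_{\mathrm{dist}}$ is undefined) have to be excluded or assigned $\lambda=0$; I would state that $\lambda_d$ is defined as $0$ whenever a severity weight vanishes.
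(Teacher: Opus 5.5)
Your proposal is correct and follows the paper's argument. Like the paper, you absorb the nonnegative severity weights and the strictly positive RMS denominators into $\lambda_d,\lambda_c$, invoke Proposition~\ref{prop:feedback-geometry}, and note that positive rescaling preserves strict descent. You go slightly further than the paper, which only calls the final normalization factor nonnegative, by checking that $(d_{\mathrm{dist}}+d_{\mathrm{cov}})/(\operatorname{RMS}(g)+\epsilon)$ is strictly positive whenever $g\ne0$.
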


\begin{proof}
In Equation~\ref{eq:unified-feedback}, each direction is multiplied by its
nonnegative severity weight and divided by a strictly positive scalar.  The
final RMS normalization multiplies the resulting vector by a nonnegative
scalar.  Proposition~\ref{prop:feedback-geometry} therefore applies.
\end{proof}

\subsection{Delayed-memory tracking under policy drift}

Fix a trajectory--horizon pair and suppress these indices.  Let
$(\mathcal F_s)_{s\ge0}$ be a filtration, and let
$X_s\in\mathbb R^{d_z}$ be an integrable, $\mathcal F_s$-measurable feedback
vector associated with proposal round $s$.  Define
\begin{equation}
 g_s=\mathbb E[X_s\mid\mathcal F_{s-1}],
 \qquad
 \xi_s=X_s-g_s.
 \label{eq:analysis-field-decomposition}
\end{equation}
Then $(\xi_s)_{s\ge0}$ is a martingale-difference sequence.  The vector
$X_s$ may include the severity weighting and RMS normalizations in
Equation~\ref{eq:unified-feedback}.

Fix integers $d_0\ge1$, $H\ge1$, and $t\ge d_0+H-1$.  Suppose that exactly
one record matures per round after delay $d_0$.  For a fixed
$\rho\in[0,1)$, the zero-initialized EMA after $H$ matured records is
\begin{align}
 m_t
 &=(1-\rho)\sum_{j=0}^{H-1}\rho^jX_{t-d_0-j}
  =(1-\rho^H)\bar m_t,
 \label{eq:analysis-ema-unrolled}\\
 \bar m_t
 &=\sum_{j=0}^{H-1}a_{H,j}X_{t-d_0-j},
 \qquad
 a_{H,j}=\frac{(1-\rho)\rho^j}{1-\rho^H}.
 \label{eq:analysis-normalized-ema}
\end{align}
The weights are deterministic, nonnegative, and sum to one.  Define
\begin{align}
 \ell_H(\rho,d_0)
 &=\sum_{j=0}^{H-1}a_{H,j}(d_0+j)\notag\\
 &=d_0+\frac{\rho}{1-\rho}
   -\frac{H\rho^H}{1-\rho^H},
 \label{eq:analysis-effective-age}\\
 \nu_H(\rho)
 &=\sum_{j=0}^{H-1}a_{H,j}^2\notag\\
 &=\frac{(1-\rho)^2(1-\rho^{2H})}
 {(1-\rho^H)^2(1-\rho^2)}.
 \label{eq:analysis-weight-mass}
\end{align}

Let $\mathcal L_t:\mathbb R^{d_z}\to\mathbb R$ be differentiable, let
$z_t\in\mathbb R^{d_z}$, and set
$h_t=-\nabla\mathcal L_t(z_t)$.  Assume the following conditions.

\begin{enumerate}
 \item[(A1)] \emph{Local smoothness.}
 There exists $L>0$ such that $\mathcal L_t$ is $L$-smooth on the segment
 $\{z_t+s\Delta z_t:s\in[0,1]\}$.

 \item[(A2)] \emph{Bounded predictable-field drift.}
 For every consecutive pair of indices needed below,
 \begin{equation}
  \|g_{s+1}-g_s\|_2\le\Gamma
  \quad\text{almost surely}.
 \end{equation}

 \item[(A3)] \emph{Conditional sub-Gaussian noise.}
 For every unit vector $v\in\mathbb R^{d_z}$ and every
 $\lambda\in\mathbb R$,
 \begin{equation}
  \begin{aligned}
  &\mathbb E\!\left[
   \exp(\lambda v^\top\xi_s)\mid\mathcal F_{s-1}
  \right]\\
  &\qquad\le\exp(\lambda^2\sigma^2/2)
  \quad\text{almost surely}.
  \end{aligned}
  \label{eq:analysis-subgaussian}
 \end{equation}

 \item[(A4)] \emph{Current-field calibration.}
 \begin{equation}
  \|h_t-g_t\|_2\le b
  \quad\text{almost surely}.
  \label{eq:analysis-calibration}
 \end{equation}
\end{enumerate}

Assumption (A4) is not implied by the pointwise identities in
Proposition~\ref{prop:feedback-geometry}: $X_s$ may contain normalization,
severity weighting, sparse-support restriction, coverage feedback, and
randomness not present in the gradient of $\mathcal L_t$.

\begin{theorem}[Delayed-memory tracking and current-surrogate descent]
\label{thm:delayed-memory-descent}
For a fixed pair $(t,H)$ and $\delta\in(0,1)$, define
\begin{align}
 \zeta_\delta
 &=d_z\log5+\log\frac{2}{\delta},\notag\\
 \varepsilon_{t,H}(\delta)
 &=\Gamma\ell_H(\rho,d_0)
  +2\sigma\sqrt{2\nu_H(\rho)\zeta_\delta},
 \label{eq:analysis-tracking-radius}\\
 e_{t,H}(\delta)&=b+\varepsilon_{t,H}(\delta).
 \label{eq:analysis-total-error}
\end{align}
Under (A2)--(A3), there exists an event $\mathcal E_{t,H,\delta}$ satisfying
$\mathbb P(\mathcal E_{t,H,\delta})\ge1-\delta$ such that
\begin{equation}
 \|\bar m_t-g_t\|_2\le\varepsilon_{t,H}(\delta)
 \quad\text{on }\mathcal E_{t,H,\delta}.
 \label{eq:analysis-tracking-bound}
\end{equation}

Assume additionally (A1) and (A4), $m_t\ne0$, and $\alpha_t>0$.  Write the
Reflex perturbation as
\begin{align}
 \Delta z_t
 &=\alpha_t\rms(z_t)
   \frac{m_t}{\rms(m_t)+\epsilon}
  =\eta_td_t,
 \label{eq:analysis-reflex-step}\\
 d_t&=\frac{m_t}{\|m_t\|_2}
     =\frac{\bar m_t}{\|\bar m_t\|_2},
 \qquad
 \eta_t=\|\Delta z_t\|_2.
 \label{eq:analysis-step-decomposition}
\end{align}
On $\mathcal E_{t,H,\delta}$,
\begin{align}
 &\mathcal L_t(z_t+\Delta z_t)-\mathcal L_t(z_t)\notag\\
 &\qquad\le-\eta_t\left(
 \|\bar m_t\|_2-e_{t,H}(\delta)-\frac{L\eta_t}{2}
 \right).
 \label{eq:analysis-descent-bound}
\end{align}
Consequently, on the same event, strict descent holds whenever
\begin{equation}
 \begin{aligned}
 \|\bar m_t\|_2&>e_{t,H}(\delta),\\
 0<\eta_t&<
 \frac{2(\|\bar m_t\|_2-e_{t,H}(\delta))}{L}.
 \end{aligned}
 \label{eq:analysis-descent-condition}
\end{equation}
The probability statement is pointwise in $(t,H)$; a simultaneous statement
over several proposal rounds requires an additional union bound or a
time-uniform concentration inequality.
\end{theorem}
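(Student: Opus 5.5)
The proof splits $\bar m_t-g_t$ into a deterministic drift (bias) part and a martingale noise part, bounds each separately, and then feeds the tracking bound into the standard $L$-smooth descent lemma. By Equation~\ref{eq:analysis-field-decomposition} and the fact that the weights $a_{H,j}$ sum to one,
\begin{equation*}
 \bar m_t-g_t=\sum_{j=0}^{H-1}a_{H,j}\bigl(g_{t-d_0-j}-g_t\bigr)+\sum_{j=0}^{H-1}a_{H,j}\xi_{t-d_0-j}.
\end{equation*}
For the bias term, telescope $g_t-g_{t-d_0-j}$ over $d_0+j$ consecutive differences and apply (A2) to each. This gives $\|g_{t-d_0-j}-g_t\|_2\le\Gamma(d_0+j)$ almost surely, so the bias is at most $\Gamma\ell_H(\rho,d_0)$ by the definition in Equation~\ref{eq:analysis-effective-age}. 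The closed forms for $\ell_H$ and $\nu_H$ are routine geometric-series computations that I would verify but not belabor.

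For the noise term $N=\sum_j a_{H,j}\xi_{t-d_0-j}$, I would first treat a fixed unit vector $v$. The weights are deterministic and each $\xi_s$ is conditionally sub-Gaussian by (A3). Peeling conditional expectations from the latest index down, using the tower property along the filtration, gives
\begin{equation*}
 \mathbb E\exp(\lambda v^\top N)\le\exp\bigl(\lambda^2\sigma^2\nu_H(\rho)/2\bigr).
\end{equation*}
Chernoff then yields $\mathbb P(|v^\top N|>u)\le2\exp(-u^2/(2\sigma^2\nu_H))$. Next, take a $1/2$-net $\mathcal N$ of the unit sphere with $|\mathcal N|\le5^{d_z}$, and use the standard argument $\|N\|_2\le2\max_{v\in\mathcal N}v^\top N$. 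A union bound with $u=\sigma\sqrt{2\nu_H\zeta_\delta}$, where $\zeta_\delta=d_z\log5+\log(2/\delta)$, gives failure probability at most $\delta$ and $\|N\|_2\le2\sigma\sqrt{2\nu_H\zeta_\delta}$. Let $\mathcal E_{t,H,\delta}$ be this good event. Combining it with the almost-sure bias bound gives Equation~\ref{eq:analysis-tracking-bound}.

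For descent, (A1) gives
\begin{equation*}
 \mathcal L_t(z_t+\eta_td_t)\le\mathcal L_t(z_t)-\eta_t h_t^\top d_t+\tfrac{L}{2}\eta_t^2.
\end{equation*}
Since $d_t=\bar m_t/\|\bar m_t\|_2$ (valid because $m_t=(1-\rho^H)\bar m_t\ne0$ with a positive factor),
\begin{equation*}
 h_t^\top d_t=\|\bar m_t\|_2-(\bar m_t-h_t)^\top d_t\ge\|\bar m_t\|_2-\|\bar m_t-g_t\|_2-\|g_t-h_t\|_2\ge\|\bar m_t\|_2-e_{t,H}(\delta),
\end{equation*}
using (A4) and the tracking bound on $\mathcal E_{t,H,\delta}$. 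This is exactly Equation~\ref{eq:analysis-descent-bound}, and Equation~\ref{eq:analysis-descent-condition} follows because the bracket is positive under those conditions.

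The main obstacle I expect is the martingale concentration step. Because the $\xi$'s are not independent, the MGF bound must be obtained by sequential conditioning, which requires the weights to be deterministic, as they are here. The indices $t-d_0-j$ are increasing in time order, so the conditioning goes through cleanly. The net constant (factor $2$, cardinality $5^{d_z}$) must also match the stated $\zeta_\delta$; the $\log 2$ in $\zeta_\delta$ comes from the two-sided tail.
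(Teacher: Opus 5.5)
Your proposal is correct and follows the paper's proof essentially step for step: the same bias--noise split of $\bar m_t-g_t$, the drift bound $\Gamma(d_0+j)$ from repeated use of (A2), the sequential-conditioning MGF bound combined with a $1/2$-net of size at most $5^{d_z}$ and a two-sided union bound at $u=\sigma\sqrt{2\nu_H(\rho)\zeta_\delta}$, and finally the descent lemma with the triangle inequality routed through $g_t$ and (A4). The constants and the event $\mathcal E_{t,H,\delta}$ also match the paper's.
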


\begin{proof}
By Equation~\ref{eq:analysis-field-decomposition},
\begin{align}
 \bar m_t-g_t
 &=\sum_{j=0}^{H-1}a_{H,j}
   \left(g_{t-d_0-j}-g_t\right)\notag\\
 &\quad+\sum_{j=0}^{H-1}a_{H,j}\xi_{t-d_0-j}\notag\\
 &=:B_t+N_t.
 \label{eq:analysis-bias-noise}
\end{align}
For each $j$, repeated application of (A2) yields
\begin{equation}
 \|g_{t-d_0-j}-g_t\|_2
 \le\Gamma(d_0+j)
 \quad\text{almost surely}.
\end{equation}
The triangle inequality and the nonnegativity of the weights therefore give
\begin{equation}
 \|B_t\|_2
 \le\Gamma\sum_{j=0}^{H-1}a_{H,j}(d_0+j)
 =\Gamma\ell_H(\rho,d_0).
 \label{eq:analysis-drift-bound}
\end{equation}

Let $\mathcal N$ be a $1/2$-net of the Euclidean unit sphere in
$\mathbb R^{d_z}$ with $|\mathcal N|\le5^{d_z}$.  Fix
$v\in\mathcal N$.  Since the coefficients $a_{H,j}$ are deterministic,
iterated conditioning in increasing order of the time indices and (A3) imply
\begin{equation}
 \begin{aligned}
 &\mathbb E\exp(\lambda v^\top N_t)\\
 &\quad\le\exp\!\left(
  \frac{\lambda^2\sigma^2}{2}
  \sum_{j=0}^{H-1}a_{H,j}^2
 \right)\\
 &\quad=\exp\!\left(
  \frac{\lambda^2\sigma^2\nu_H(\rho)}{2}
 \right).
 \end{aligned}
 \label{eq:analysis-noise-mgf}
\end{equation}
Thus, for every $u>0$,
\begin{equation}
 \mathbb P\!\left(|v^\top N_t|\ge u\right)
 \le2\exp\!\left(-\frac{u^2}
 {2\sigma^2\nu_H(\rho)}\right).
\end{equation}
Applying the union bound over $\mathcal N$ with
$u=\sigma\sqrt{2\nu_H(\rho)\zeta_\delta}$ gives an event of probability at
least $1-\delta$ on which
\begin{equation}
 \max_{v\in\mathcal N}|v^\top N_t|
 \le\sigma\sqrt{2\nu_H(\rho)\zeta_\delta}.
\end{equation}
For a $1/2$-net,
$\|x\|_2\le2\max_{v\in\mathcal N}|v^\top x|$.  Hence, on this event,
\begin{equation}
 \|N_t\|_2
 \le2\sigma\sqrt{2\nu_H(\rho)\zeta_\delta}.
 \label{eq:analysis-noise-bound}
\end{equation}
Equations~\ref{eq:analysis-drift-bound} and
\ref{eq:analysis-noise-bound} establish
Equation~\ref{eq:analysis-tracking-bound}.

On $\mathcal E_{t,H,\delta}$, (A4) and the triangle inequality imply
\begin{equation}
 \|h_t-\bar m_t\|_2
 \le\|h_t-g_t\|_2+\|g_t-\bar m_t\|_2
 \le e_{t,H}(\delta).
 \label{eq:analysis-current-memory-error}
\end{equation}
Because $m_t=(1-\rho^H)\bar m_t$ and $1-\rho^H>0$, the equality of the two
unit directions in Equation~\ref{eq:analysis-step-decomposition} holds.
Therefore,
\begin{align}
 h_t^\top d_t
 &=\bar m_t^\top d_t+(h_t-\bar m_t)^\top d_t\notag\\
 &\ge\|\bar m_t\|_2-\|h_t-\bar m_t\|_2\notag\\
 &\ge\|\bar m_t\|_2-e_{t,H}(\delta).
 \label{eq:analysis-alignment-lower-bound}
\end{align}
By the descent lemma, (A1), and
$h_t=-\nabla\mathcal L_t(z_t)$,
\begin{align}
 &\mathcal L_t(z_t+\eta_td_t)-\mathcal L_t(z_t)\notag\\
 &\quad\le-\eta_th_t^\top d_t+\frac{L\eta_t^2}{2}\notag\\
 &\le-\eta_t\left(
 \|\bar m_t\|_2-e_{t,H}(\delta)-\frac{L\eta_t}{2}
 \right).
\end{align}
This proves Equations~\ref{eq:analysis-descent-bound} and
\ref{eq:analysis-descent-condition}.
\end{proof}

\begin{corollary}[Sequential infinite-memory scaling]
\label{cor:memory-scaling}
For each fixed $\rho\in[0,1)$ and fixed $d_0$,
\begin{equation}
 \begin{aligned}
 \lim_{H\to\infty}\ell_H(\rho,d_0)
 &=d_0+\frac{\rho}{1-\rho},\\
 \lim_{H\to\infty}\nu_H(\rho)
 &=\frac{1-\rho}{1+\rho}.
 \end{aligned}
 \label{eq:analysis-asymptotic-memory}
\end{equation}
After taking this $H\to\infty$ limit, set $x=1-\rho$.  If
$\Gamma,\sigma>0$, then, as $x\downarrow0$, the limiting tracking radius
satisfies
\begin{equation}
 \begin{aligned}
 \varepsilon_{\infty}(x)
 &=\frac{\Gamma}{x}+C_\delta\sigma\sqrt{x}\\
 &\quad+\Gamma(d_0-1)+O(\sigma x^{3/2}),\\
 C_\delta&=2\sqrt{\zeta_\delta}.
 \end{aligned}
 \label{eq:analysis-leading-memory-radius}
\end{equation}
The unique stationary point of the leading-order proxy
$f(x)=\Gamma/x+C_\delta\sigma\sqrt{x}$ is
\begin{equation}
 x_{\mathrm{proxy}}
 =\left(\frac{2\Gamma}{C_\delta\sigma}\right)^{2/3}.
 \label{eq:analysis-memory-rate}
\end{equation}
It is the minimizer of $f$ on $(0,\infty)$.  On the admissible interval
$x\in(0,1]$, the corresponding proxy minimizer is
$\min\{1,x_{\mathrm{proxy}}\}$.  Equation~\ref{eq:analysis-memory-rate} is
an asymptotic design rule when $x_{\mathrm{proxy}}\ll1$; it is not asserted
to minimize the finite-$H$ radius in
Equation~\ref{eq:analysis-tracking-radius}.
\end{corollary}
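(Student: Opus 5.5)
The statement to prove is Corollary~\ref{cor:memory-scaling}; the argument is elementary real analysis in three steps.

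\textbf{Step 1: the $H\to\infty$ limits.} I start from the closed forms in Equations~\ref{eq:analysis-effective-age} and~\ref{eq:analysis-weight-mass}. For fixed $\rho\in[0,1)$ we have $\rho^H\to0$, so $1-\rho^H\to1$ and $1-\rho^{2H}\to1$. The term $H\rho^H/(1-\rho^H)$ tends to $0$, since $H\rho^H\to0$ for $\rho<1$; the case $\rho=0$ is trivial. This gives $\ell_H\to d_0+\rho/(1-\rho)$. For the weight mass, $\nu_H\to(1-\rho)^2/(1-\rho^2)=(1-\rho)/(1+\rho)$. If desired, I would briefly re-verify the closed forms themselves from $a_{H,j}$ using geometric sums and $\sum_j j\rho^j$, but they are stated earlier, so I would simply invoke them.

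\textbf{Step 2: the expansion of $\varepsilon_\infty$.} Substituting $x=1-\rho$ gives $\rho/(1-\rho)=(1-x)/x=1/x-1$. Hence the drift part equals $\Gamma(d_0-1)+\Gamma/x$ exactly. For the noise part, $\nu_\infty=x/(2-x)$, so
\[
2\sigma\sqrt{2\zeta_\delta\,x/(2-x)}=2\sigma\sqrt{\zeta_\delta}\,\sqrt{x}\,(1-x/2)^{-1/2}.
\]
Expanding $(1-x/2)^{-1/2}=1+x/4+O(x^2)$ yields $C_\delta\sigma\sqrt{x}+O(\sigma x^{3/2})$ with $C_\delta=2\sqrt{\zeta_\delta}$. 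Adding the two parts gives Equation~\ref{eq:analysis-leading-memory-radius}.

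\textbf{Step 3: minimizing the proxy.} For $f(x)=\Gamma/x+C_\delta\sigma\sqrt{x}$, the derivative is $f'(x)=-\Gamma/x^2+\tfrac12C_\delta\sigma x^{-1/2}$. Setting $f'=0$ gives $x^{3/2}=2\Gamma/(C_\delta\sigma)$, which is Equation~\ref{eq:analysis-memory-rate}. This root is unique because $x^{3/2}$ is strictly increasing. Moreover, $f'<0$ below the root and $f'>0$ above it, so $f$ is strictly decreasing and then increasing. The root is therefore the global minimizer on $(0,\infty)$. Restricting to $(0,1]$, monotonicity means the minimizer is $x_{\mathrm{proxy}}$ when $x_{\mathrm{proxy}}\le1$. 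Otherwise $f$ is decreasing on $(0,1]$ and the minimizer is $1$, giving $\min\{1,x_{\mathrm{proxy}}\}$. The closing remark of the corollary is interpretive and needs no proof.

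The only delicate point is keeping the remainder honest in Step 2: the error must be uniform in the sense of $O(\sigma x^{3/2})$ as $x\downarrow0$. This follows from the Taylor bound on $(1-x/2)^{-1/2}$ for $x\in(0,1]$.
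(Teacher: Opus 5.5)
Your proof is correct and follows the paper's argument essentially step for step: the limits come from the closed forms, the substitution $x=1-\rho$ gives $1/x+d_0-1$ and $x/(2-x)$, the noise term is expanded to $C_\delta\sigma\sqrt{x}+O(\sigma x^{3/2})$, and $f'(x)=0$ is solved. The one small difference is that you establish global minimality from the sign of $f'$ instead of the paper's check $f''=3\Gamma/(2x^3)>0$ at the critical point. Your version also directly justifies the $\min\{1,x_{\mathrm{proxy}}\}$ claim on $(0,1]$, which the paper's proof leaves implicit.
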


\begin{proof}
For fixed $\rho<1$, Equation~\ref{eq:analysis-asymptotic-memory} follows
directly from Equations~\ref{eq:analysis-effective-age} and
\ref{eq:analysis-weight-mass}.  Substituting $\rho=1-x$ into the limiting
expressions gives
\begin{equation}
 d_0+\frac{\rho}{1-\rho}
 =\frac{1}{x}+d_0-1,
 \qquad
 \frac{1-\rho}{1+\rho}
 =\frac{x}{2-x}.
\end{equation}
Consequently,
\begin{align}
 2\sigma\sqrt{2\zeta_\delta\frac{x}{2-x}}
 &=2\sigma\sqrt{\zeta_\delta x}
   \left(1+O(x)\right)\notag\\
 &=C_\delta\sigma\sqrt{x}+O(\sigma x^{3/2}),
\end{align}
which proves Equation~\ref{eq:analysis-leading-memory-radius}.  Finally,
\begin{equation}
 f'(x)=-\frac{\Gamma}{x^2}
       +\frac{C_\delta\sigma}{2\sqrt{x}}.
\end{equation}
The equation $f'(x)=0$ has the unique solution in
Equation~\ref{eq:analysis-memory-rate}.  At this point,
$f''(x)=3\Gamma/(2x^3)>0$, which proves the minimization claim.
\end{proof}

\subsection{Transfer to exact first-slot acceptance}

Let $p$ be a probability distribution on $\mathcal V$, let
$\varnothing\ne C\subseteq\mathcal V$, and let $q_C$ be a probability
distribution supported on $C$, extended by zero on $C^c$.  Assume
$p(C)>0$ and
$\operatorname{supp}(p|_C)\subseteq\operatorname{supp}(q_C)$.  Define
\begin{align}
 \mu_C&=p(C^c),\notag\\
 p_C(v)&=\frac{p(v)}{1-\mu_C},\qquad v\in C,\notag\\
 K_C&=D_{\mathrm{KL}}(p_C\Vert q_C).
 \label{eq:analysis-candidate-quantities}
\end{align}

\begin{proposition}[Candidate-support acceptance certificate]
\label{prop:acceptance-certificate}
If the first proposal $Y$ is distributed according to $q_C$ and the exact
verifier accepts it with conditional probability
$\min\{1,p(Y)/q_C(Y)\}$, then its mean acceptance probability is
\begin{equation}
 \begin{aligned}
 A_1(p,q_C)
 &=\sum_{v\in\mathcal V}\min\{p(v),q_C(v)\}\\
 &=1-\operatorname{TV}(p,q_C).
 \end{aligned}
 \label{eq:analysis-acceptance-identity}
\end{equation}
Moreover,
\begin{equation}
 A_1(p,q_C)
 \ge(1-\mu_C)
 \left[1-\sqrt{\frac{K_C}{2}}\right]_+.
 \label{eq:analysis-acceptance-certificate}
\end{equation}
\end{proposition}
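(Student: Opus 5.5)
The proof has two parts: the acceptance identity, then the Pinsker-type certificate.

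\emph{Acceptance identity.} Condition on $Y=v$ with $q_C(v)>0$. The acceptance probability is $\min\{1,p(v)/q_C(v)\}$. Averaging over $q_C$ gives
\[
\sum_{v\in C} q_C(v)\min\{1,p(v)/q_C(v)\}=\sum_{v\in C}\min\{q_C(v),p(v)\}.
\]
Since $q_C$ vanishes off $C$, the terms with $v\notin C$ contribute $\min\{p(v),0\}=0$. This yields the sum over all of $\mathcal V$. The equality with $1-\operatorname{TV}(p,q_C)$ follows from $\min\{a,b\}=\tfrac12(a+b-|a-b|)$ and $\sum_v p(v)=\sum_v q_C(v)=1$.

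\emph{Certificate.} The plan is to compare $p$ on $C$ with the renormalized $p_C$. The pointwise bound
\[
\min\{p(v),q_C(v)\}=\min\{(1-\mu_C)p_C(v),q_C(v)\}\ge(1-\mu_C)\min\{p_C(v),q_C(v)\}
\]
holds for $v\in C$, because $1-\mu_C\le1$. Summing over $C$ gives
\[
A_1(p,q_C)\ge(1-\mu_C)\bigl(1-\operatorname{TV}(p_C,q_C)\bigr),
\]
which is the same overlap structure as the surrogate in Equation~\ref{eq:acceptance-approximation}. Pinsker's inequality $\operatorname{TV}(p_C,q_C)\le\sqrt{K_C/2}$ then gives $1-\operatorname{TV}(p_C,q_C)\ge1-\sqrt{K_C/2}$. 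Since the left side is nonnegative anyway, it is at least the positive part $\left[1-\sqrt{K_C/2}\right]_+$. Multiplying by $1-\mu_C\ge0$ finishes the bound.

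The support assumption $\operatorname{supp}(p|_C)\subseteq\operatorname{supp}(q_C)$ ensures $K_C<\infty$; if $K_C$ were infinite the bound would hold trivially anyway. The assumption $p(C)>0$ makes $p_C$ well defined.

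The main point needing care is the pointwise rescaling step, where the factor $(1-\mu_C)$ is pulled out of the minimum. This works because $c\min\{a,b\}\le\min\{ca,b\}$ whenever $c\in[0,1]$ and $b\ge0$. Everything else is standard.
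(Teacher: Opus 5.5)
Your proof is correct and follows essentially the same route as the paper: the acceptance identity comes from averaging $q_C(v)\min\{1,p(v)/q_C(v)\}$ over $C$ and applying the min/TV identity, and the certificate comes from the pointwise rescaling $\min\{(1-\mu_C)p_C(v),q_C(v)\}\ge(1-\mu_C)\min\{p_C(v),q_C(v)\}$ followed by Pinsker's inequality. The only cosmetic difference is that you get the positive part from $1-\operatorname{TV}(p_C,q_C)\ge0$, whereas the paper uses $A_1(p,q_C)\ge0$; the two are equivalent.
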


\begin{proof}
Since $q_C(v)>0$ for every $v$ that can be sampled from $q_C$,
\begin{align}
 A_1(p,q_C)
 &=\sum_{v\in C}q_C(v)
   \min\!\left\{1,\frac{p(v)}{q_C(v)}\right\}\notag\\
 &=\sum_{v\in\mathcal V}\min\{p(v),q_C(v)\}.
\end{align}
For probability distributions $a$ and $b$ on a common finite space,
\begin{equation}
 \sum_v\min\{a(v),b(v)\}=1-\operatorname{TV}(a,b).
\end{equation}
This proves Equation~\ref{eq:analysis-acceptance-identity}.

Let $a=1-\mu_C\in(0,1]$.  For all $x,y\ge0$,
$\min\{ax,y\}\ge a\min\{x,y\}$.  Therefore,
\begin{align}
 A_1(p,q_C)
 &=\sum_{v\in C}\min\{ap_C(v),q_C(v)\}\notag\\
 &\ge a\sum_{v\in C}\min\{p_C(v),q_C(v)\}\notag\\
 &=a\left(1-\operatorname{TV}(p_C,q_C)\right).
\end{align}
Pinsker's inequality gives
$\operatorname{TV}(p_C,q_C)\le\sqrt{K_C/2}$.  Combining this inequality
with $A_1(p,q_C)\ge0$ proves
Equation~\ref{eq:analysis-acceptance-certificate}.
\end{proof}

\begin{corollary}[Fixed-support certificate transfer]
\label{cor:fixed-support-transfer}
Suppose that $p$ and $C$ remain fixed during the Reflex perturbation and
define
\begin{equation}
 \mathcal L_t(z)
 =D_{\mathrm{KL}}\!\left(p_C\Vert q_C(z)\right).
 \label{eq:analysis-candidate-surrogate}
\end{equation}
If Theorem~\ref{thm:delayed-memory-descent} applies to this loss and
Equation~\ref{eq:analysis-descent-condition} holds on
$\mathcal E_{t,H,\delta}$, then
\begin{equation}
 \begin{aligned}
 K_C'
 &:=D_{\mathrm{KL}}\!\left(
 p_C\Vert q_C(z_t+\Delta z_t)\right)\\
 &<K_C
 \quad\text{on }\mathcal E_{t,H,\delta}.
 \end{aligned}
\end{equation}
If $K_C<2$, define the certificate value
\begin{equation}
 \mathcal B_C(K)=(1-\mu_C)
 \left[1-\sqrt{K/2}\right]_+.
\end{equation}
Then
\begin{equation}
 \begin{aligned}
 &\mathcal B_C(K_C')-\mathcal B_C(K_C)\\
 &\quad=(1-\mu_C)
 \left(\sqrt{\frac{K_C}{2}}-
       \sqrt{\frac{K_C'}{2}}\right)>0.
 \end{aligned}
 \label{eq:analysis-certificate-improvement}
\end{equation}
This conclusion concerns the certified lower bound; it does not, by itself,
imply $A_1(p,q_C(z_t+\Delta z_t))>A_1(p,q_C(z_t))$.
\end{corollary}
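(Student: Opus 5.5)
The plan is to obtain the strict decrease of the conditional KL directly from Theorem~\ref{thm:delayed-memory-descent}, and then to translate that decrease into the certificate through the monotonicity of $K\mapsto[1-\sqrt{K/2}]_+$.

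\textbf{Step 1: strict KL decrease.} I would take $\mathcal L_t$ to be the loss in Equation~\ref{eq:analysis-candidate-surrogate}. By hypothesis, Theorem~\ref{thm:delayed-memory-descent} applies to this loss, so (A1)--(A4) hold for it. Equation~\ref{eq:analysis-descent-condition} holds on $\mathcal E_{t,H,\delta}$, so the factor $\|\bar m_t\|_2-e_{t,H}(\delta)-L\eta_t/2$ in Equation~\ref{eq:analysis-descent-bound} is strictly positive. Because $\eta_t>0$, the right-hand side of Equation~\ref{eq:analysis-descent-bound} is strictly negative. Since $p$ and $C$ are held fixed, $p_C$ and $\mu_C$ do not change, and we have $\mathcal L_t(z_t)=K_C$ and $\mathcal L_t(z_t+\Delta z_t)=K_C'$. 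Hence $K_C'<K_C$ on $\mathcal E_{t,H,\delta}$.

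\textbf{Step 2: certificate improvement.} KL divergence is nonnegative, so $0\le K_C'<K_C<2$. Both $\sqrt{K_C/2}$ and $\sqrt{K_C'/2}$ therefore lie in $[0,1)$, and the positive parts in $\mathcal B_C$ are inactive. This gives $\mathcal B_C(K)=(1-\mu_C)(1-\sqrt{K/2})$ for both arguments. Subtracting the two values yields the identity in Equation~\ref{eq:analysis-certificate-improvement}.

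Strict positivity of the difference uses two facts. First, $\sqrt{\cdot}$ is strictly increasing on $[0,\infty)$, so $\sqrt{K_C/2}>\sqrt{K_C'/2}$. Second, $1-\mu_C=p(C)>0$ by the standing assumption. Together these make the difference strictly positive.

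\textbf{Obstacle: well-definedness of $K_C'$.} The delicate point is making sure $K_C'$ is a finite, well-defined quantity under the standing assumptions. The proposal $q_C(z)$ is a softmax restricted to $C$, so it has full support on $C$ for every $z$. This preserves $\operatorname{supp}(p|_C)\subseteq\operatorname{supp}(q_C)$ after the perturbation, which is exactly what Proposition~\ref{prop:acceptance-certificate} requires for the certificate to be meaningful at $K_C'$. With this checked, the final caveat follows without further argument: the conclusion concerns only the lower bound $\mathcal B_C$, which is not asserted to be tight, so no claim about $A_1$ itself is made.
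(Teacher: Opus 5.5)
Your proposal is correct and matches the paper's proof: you read off $K_C'<K_C$ from Equation~\ref{eq:analysis-descent-bound} under Equation~\ref{eq:analysis-descent-condition}, note that $\mu_C$ is unchanged, and use $K_C'<K_C<2$ to drop the clipping in $\mathcal B_C$ before subtracting. Your extra checks are correct details the paper leaves implicit: nonnegativity of the KL divergence, $1-\mu_C=p(C)>0$ for strict positivity, and full support of the restricted softmax so that $K_C'$ is finite.
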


\begin{proof}
The strict inequality $K_C'<K_C$ is
Equation~\ref{eq:analysis-descent-bound} under
Equation~\ref{eq:analysis-descent-condition}.  Because $p$ and $C$ are
fixed, $\mu_C$ is unchanged.  Since $K_C'<K_C<2$, the clipping operation in
$\mathcal B_C$ is inactive at both arguments.  Subtraction yields
Equation~\ref{eq:analysis-certificate-improvement}.
\end{proof}

\begin{proposition}[Failure of support-agnostic transfer]
\label{prop:support-mismatch}
If the diagnostic support $S$ differs from the verifier support $C$, strict
descent of $D_{\mathrm{KL}}(p_S\Vert q_S)$ does not, in general, imply either
an increase in $A_1(p,q_C)$ or an increase in the certificate associated with
$C$, even when $p$ and $C$ remain fixed.
\end{proposition}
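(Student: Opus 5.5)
This is a negative ("does not, in general") statement, so I will prove it with one explicit counterexample. The counterexample will have a fixed target $p$ and a fixed verifier support $C$, together with a diagnostic support $S\neq C$. It will exhibit a perturbation $z\mapsto z'$ along which $D_{\mathrm{KL}}(p_S\Vert q_S)$ strictly decreases, while both $A_1(p,q_C)$ and the certificate $\mathcal B_C(K_C)$ of Proposition~\ref{prop:acceptance-certificate} strictly decrease. Establishing failure of both implications at once with a single example suffices.

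I will use the simplest geometry: a vocabulary of three tokens $\{a,b,c\}$ and $d_z=3$. The output projection is $W=I$, so logits equal $z$ and $q(\cdot;z)=\operatorname{softmax}(z/\tau)$ with $\tau=1$. I take $C=\{a,b\}$ and $S=\{b,c\}$. Here $S$ is exactly the kind of support Equation~\ref{eq:support} could produce, after dropping $a$ for the sake of the example.

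Because restriction followed by renormalization depends only on logit differences within the support, $q_S$ depends only on $z_b-z_c$ and $q_C$ depends only on $z_a-z_b$. I will choose the following values.
\begin{itemize}
 \item A target $p$ with $p_C$ favouring $a$, say $p=(0.5,0.2,0.3)$.
 \item A current state with $z_b-z_c$ far from $\log(p_b/p_c)$, and $z_a-z_b$ exactly at $\log(p_a/p_b)$, so that $q_C=p_C$ and $K_C=0$.
\end{itemize}
The perturbation will increase $z_b$ alone. This moves $z_b-z_c$ toward its optimum, so the $S$-KL strictly decreases, which can be checked with one explicit derivative or number. The same move decreases $z_a-z_b$ away from its optimum, so $K_C'>0=K_C$ and $q_C\neq p_C$.

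Next I will compute the two quantities on $C$. The certificate $\mathcal B_C$ is strictly decreasing in $K$ on $[0,2)$, so it drops.

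For exact acceptance I will use the identity $A_1=\sum_v\min\{p(v),q_C(v)\}$ from Proposition~\ref{prop:acceptance-certificate}. This is the step that needs care, because $A_1$ compares $q_C$ with the unnormalized $p$ restricted to $C$, not with $p_C$. It is therefore not automatic that moving away from $p_C$ lowers $A_1$. In the example $p(a)=0.5$ and $p(b)=0.2$. At $q_C=p_C=(5/7,2/7)$ we get $A_1=0.5+0.2=0.7$. Increasing $q_C(b)$ slightly keeps $q_C(b)>0.2$ and $q_C(a)>0.5$ as long as $q_C(a)$ stays above $0.5$, so $A_1$ remains $0.7$ and does not strictly drop.

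To get a strict decrease I will instead start from a state where $q_C(a)$ is exactly $p(a)$. Then $A_1$ is $p(a)+p(b)$ throughout the region $q_C(a)\ge p(a)$ and $q_C(b)\ge p(b)$, and it strictly drops once $q_C(a)<p(a)$. Concretely, I will take the initial $q_C(a)=0.5$, so that $A_1=0.7$. Raising $z_b$ gives $q_C(a)<0.5$ and $A_1=q_C(a)+0.2<0.7$. The initial $K_C$ is then positive, and it still increases because $q_C$ moves further from $p_C=(5/7,2/7)$. The certificate therefore also drops, provided $K_C<2$, which I will check numerically.

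The main obstacle is exactly this mismatch between $A_1$ and $K_C$. I will resolve it with the explicit numerical choice above, and verify all three monotonicity claims with a few lines of arithmetic. I will also remark that any $W$ with linearly independent rows embeds this example.
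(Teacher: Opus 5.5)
Your counterexample is correct and is essentially the paper's argument: a three-token vocabulary with $W=I_3$, $\tau=1$, and distinct supports $S\neq C$ sharing one token, on which the $S$-restricted KL strictly decreases while both $A_1(p,q_C)$ and the $C$-certificate strictly decrease. The two differ only in how they escape the flat region of $A_1$. The paper starts at $q_C=p_C$ and makes a large jump, whereas you start at the kink $q_C(a)=p(a)$ and raise $z_b$ alone, so the failure persists for arbitrarily small steps. Do state explicitly that initially $z_b-z_c<\log(2/3)$; for example, $z=(0,0,\log 2)\mapsto(0,\log\tfrac{4}{3},\log 2)$ gives $q_S'=p_S$, and $A_1$ falls from $7/10$ to $22/35$.
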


\begin{proof}
Let $\mathcal V=\{1,2,3\}$ and
\begin{equation}
 \begin{aligned}
 p&=\left(\frac12,\frac25,\frac1{10}\right),\\
 S&=\{1,2\},
 &C&=\{1,3\}.
 \end{aligned}
\end{equation}
Let $u=(1,1,1/5)$ and $u'=(5,4,10)$.  For every nonempty
$A\subseteq\mathcal V$, define
\begin{equation}
 \begin{aligned}
 q_A(v)&=\frac{u_v}{\sum_{w\in A}u_w},\\
 q_A'(v)&=\frac{u_v'}{\sum_{w\in A}u_w'},
 \qquad v\in A.
 \end{aligned}
\end{equation}
These restricted distributions can be realized by a common softmax model.
Specifically, take $d_z=3$, $W=I_3$, $\tau=1$, and
\begin{equation}
 \begin{aligned}
 z&=(\log u_1,\log u_2,\log u_3),\\
 z'&=(\log u_1',\log u_2',\log u_3').
 \end{aligned}
\end{equation}
Direct calculation gives
\begin{align}
 p_S&=\left(\frac59,\frac49\right),
 &q_S&=\left(\frac12,\frac12\right),
 &q_S'&=\left(\frac59,\frac49\right),\notag\\
 p_C&=\left(\frac56,\frac16\right),
 &q_C&=\left(\frac56,\frac16\right),
 &q_C'&=\left(\frac13,\frac23\right).
\end{align}
Hence, by strict positivity of KL divergence away from equality,
\begin{equation}
 D_{\mathrm{KL}}(p_S\Vert q_S')=0
 <D_{\mathrm{KL}}(p_S\Vert q_S).
\end{equation}
On the other hand, Equation~\ref{eq:analysis-acceptance-identity} yields
\begin{align}
 A_1(p,q_C)
 &=\min\!\left\{\frac12,\frac56\right\}
  +\min\!\left\{\frac1{10},\frac16\right\}
  =\frac35,\notag\\
 A_1(p,q_C')
 &=\min\!\left\{\frac12,\frac13\right\}
  +\min\!\left\{\frac1{10},\frac23\right\}
  =\frac{13}{30}.
\end{align}
Thus $A_1(p,q_C')<A_1(p,q_C)$ despite strict KL improvement on $S$.
Moreover, the initial candidate-support KL equals zero whereas the corrected
candidate-support KL is strictly positive, so the certificate associated with
$C$ also decreases.  This proves the claim.
\end{proof}

\section{Exact Tree Verification}
\label{app:exact}
This appendix specifies the sampling semantics used in all rollout experiments. The future-token tree is used to batch target evaluations; token commitment follows the stochastic node-wise procedure below, not longest-prefix matching or greedy branch selection.

\subsection{Tree Proposal Law}
Let $a\sim p_0=\pi_\theta(\cdot\mid x_{\le t})$ be an exact target-sampled anchor token. The speculative tree is rooted at prefix $x_{\le t}a$. A node $u$ at speculative depth $d\in\{0,\ldots,K-1\}$ represents the unique prefix
\begin{equation}
 \mathrm{path}(u)=(a,v_1,\ldots,v_d).
\end{equation}
Its next speculative position is proposed by head $k=d+1$. Let $\bar q_u$ denote that head's base distribution after Reflex. In the current implementation, nodes at the same depth share the same head logits, so $\bar q_u$ may be numerically independent of $\mathrm{path}(u)$. This does not affect correctness: an arbitrary distribution that ignores part of its conditioning context is still a valid conditional proposal.

The tree builder selects a finite support $C_u$ under the concurrency budget and samples an ordered list of $m_u$ distinct children without replacement. If $y_{u,1:j-1}$ have already been drawn, define the remaining set
\begin{equation}
 R_{u,j}=C_u\setminus\{y_{u,1},\ldots,y_{u,j-1}\}
\end{equation}
and the slot-specific conditional proposal
\begin{equation}
 q_{u,j}(v)=
 \frac{\mathbf 1[v\in R_{u,j}]\bar q_u(v)}
 {\sum_{w\in R_{u,j}}\bar q_u(w)}.
 \label{eq:slotq}
\end{equation}
Then $y_{u,j}\sim q_{u,j}$. The probability of the ordered child list is
\begin{equation}
 \Pr(y_{u,1:m_u})=\prod_{j=1}^{m_u}q_{u,j}(y_{u,j}),
 \label{eq:childlist}
\end{equation}
and the proposal factor associated with a branch that accepts slot $j_\ell$ at node $u_\ell$ is $\prod_\ell q_{u_\ell,j_\ell}(v_\ell)$. Exactness is nevertheless established node by node and does not require forming a single branch-level acceptance ratio.

Within a parent, sampling without replacement prevents duplicate token children. The same token label may appear below different parents because those nodes represent different autoregressive prefixes. If an implementation generates duplicate aliases for an identical full prefix, they are canonicalized before target evaluation and retain parent-specific proposal bookkeeping.

For every node $u$, the packed target verification forward computes
\begin{equation}
 p_u(v)=\pi_\theta\!\left(v\mid x_{\le t},\mathrm{path}(u)\right).
 \label{eq:nodep}
\end{equation}
Tree packing is valid only when these logits equal the logits from an ordinary autoregressive target forward at the same prefix, up to numerical precision.

\subsection{Node-Wise Acceptance and Residuals}
At node $u$, initialize the target residual $r_u^{(0)}=p_u$. Candidate $y_{u,j}$ is accepted with probability
\begin{equation}
 A_{u,j}=\min\!\left(1,
 \frac{r_u^{(j-1)}(y_{u,j})}{q_{u,j}(y_{u,j})}
 \right).
 \label{eq:nodeaccept}
\end{equation}
After rejection, update the full-vocabulary residual
\begin{align}
 r_u^{(j)}(v)
 &=\frac{[r_u^{(j-1)}(v)-q_{u,j}(v)]_+}{Z_{u,j}},\notag\\
 Z_{u,j}
 &=\sum_w[r_u^{(j-1)}(w)-q_{u,j}(w)]_+.
 \label{eq:noderesidual}
\end{align}
If a candidate is accepted, the verifier commits it and recurses into its child. If all siblings are rejected, it samples one fallback token from the final residual and ends the round. When the deepest speculative candidate is accepted, it samples one bonus token from the target distribution at the resulting leaf.

\begin{algorithm}[H]
\small
\caption{Exact node-wise tree verification}
\label{alg:exacttree}
\begin{algorithmic}[1]
\Function{VerifyNode}{$u$}
  \State $r\gets p_u$
  \For{$j=1,\ldots,m_u$}
    \State $y\gets y_{u,j}$ and $q\gets q_{u,j}$
    \State Draw $U\sim\mathrm{Uniform}(0,1)$
    \If{$U\le\min\{1,r(y)/q(y)\}$}
      \State Commit $y$
      \If{$y$ has a speculative child subtree}
        \State \Return $y\,\Vert\,\Call{VerifyNode}{\mathrm{child}(u,y)}$
      \Else
        \State Draw bonus $b\sim p_{\mathrm{child}(u,y)}$
        \State \Return $(y,b)$
      \EndIf
    \Else
      \State $r(v)\gets[r(v)-q(v)]_+/\sum_w[r(w)-q(w)]_+$
    \EndIf
  \EndFor
  \State Draw fallback $f\sim r$
  \State \Return $f$
\EndFunction
\end{algorithmic}
\end{algorithm}

\subsection{Sampling Exactness}
\begin{proposition}[One-node correction]
\label{prop:onenode}
Conditioned on the prefix represented by $u$, Algorithm~\ref{alg:exacttree} emits its first committed token from $p_u$.
\end{proposition}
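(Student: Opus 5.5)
We prove the claim by induction on the number of sibling slots, using the standard recursive rejection sampling argument for sampling without replacement. At node $u$, the first committed token is either the first accepted candidate $y_{u,j}$, or, if all $m_u$ candidates are rejected, the fallback drawn from $r_u^{(m_u)}$. Note that accepting a candidate commits it as the first token of this node; the subsequent recursion into the child subtree and any bonus draw only concern later tokens, so we may ignore them. It therefore suffices to show the following invariant.

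\textbf{Invariant.} For each $j$, conditioned on the prefix, on the candidates $y_{u,1:j-1}$ having been drawn and rejected, and on the remaining set $R_{u,j}$, the token emitted from slot $j$ onward is distributed as $r_u^{(j-1)}$. Since $r_u^{(0)}=p_u$, the case $j=1$ is exactly the statement.

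\textbf{Base case.} We proceed by backward induction on $j$. The base case $j=m_u+1$ is immediate: the fallback is drawn from $r_u^{(m_u)}$ by construction.

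\textbf{Inductive step.} Fix $j$, and write $r=r_u^{(j-1)}$ and $q=q_{u,j}$. The key single-step identity is that $y\sim q$ is accepted with probability $\min\{1,r(y)/q(y)\}$. Hence the probability that $v$ is accepted is $\min\{r(v),q(v)\}$. The total rejection probability is
\begin{equation}
1-\sum_v\min\{r(v),q(v)\}=\sum_v[r(v)-q(v)]_+=Z_{u,j}.
\end{equation}
By the induction hypothesis, conditional on rejection the emitted token follows $r_u^{(j)}=[r-q]_+/Z_{u,j}$. The emitted law is therefore
\begin{equation}
\min\{r(v),q(v)\}+[r(v)-q(v)]_+=r(v).
\end{equation}

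\textbf{Main obstacle: conditioning on rejected candidates.} The induction hypothesis is applied conditional on the rejected token $y_{u,j}$ and on $R_{u,j+1}$, yet we claimed that the rejection branch yields exactly $r_u^{(j)}$ regardless of which token was rejected. To justify this, I will note that the residual update uses the full-vocabulary proposal $q_{u,j}$, not the realized rejected token. Thus $r_u^{(j)}$ is a deterministic function of $r_u^{(j-1)}$ and $q_{u,j}$, and $q_{u,j}$ depends only on $R_{u,j}$. Given $R_{u,j}$, the next proposal $q_{u,j+1}$ depends on $y_{u,j}$, but the induction hypothesis holds for every realization of $R_{u,j+1}$. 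Averaging over the rejected $y_{u,j}$ then leaves the conditional law equal to $r_u^{(j)}$.

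\textbf{Remaining checks.} Two edge cases need care. First, if $Z_{u,j}=0$, rejection has probability zero and the division in the residual update is never invoked. Second, when $q(y)=0$ can occur, $y$ is never sampled, so the acceptance ratio is never evaluated there. Finally, the argument uses only that $q_{u,j}$ is a valid distribution, so Reflex corrections and shared head logits do not affect it.
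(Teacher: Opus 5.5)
Your proposal is correct and follows the same route as the paper's proof. Both use backward induction over the sibling slots of $u$, with the pointwise identity $\min\{r(v),q(v)\}+[r(v)-q(v)]_+=r(v)$ showing that slot $j$ together with the remaining slots emits $r_u^{(j-1)}$, so the node emits $r_u^{(0)}=p_u$; your added remark that $r_u^{(j)}$ depends only on $r_u^{(j-1)}$ and $q_{u,j}$, and not on the realized rejected token, makes explicit the conditioning step that the paper leaves implicit in ``conditioned on all previous candidate draws and rejections.''
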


\begin{proof}
Consider slot $k$ conditioned on all previous candidate draws and rejections. The candidate is distributed as $Y_j\sim q_{u,j}$. Its accepted contribution to output token $v$ is
\begin{align}
 &q_{u,j}(v)\min\!\left(1,
 \frac{r_u^{(j-1)}(v)}{q_{u,j}(v)}\right)\notag\\
 &\qquad=\min\{q_{u,j}(v),r_u^{(j-1)}(v)\}.
\end{align}
The probability of rejection is $Z_{u,j}$, and the conditional continuation distribution in Equation~\ref{eq:noderesidual} contributes
\begin{equation}
 Z_{u,j}r_u^{(j)}(v)
 =[r_u^{(j-1)}(v)-q_{u,j}(v)]_+.
\end{equation}
The sum of the accepted and rejected contributions is exactly $r_u^{(j-1)}(v)$. Backward induction over the remaining slots, with the final fallback sampled from the last residual, shows that the node emits $r_u^{(0)}=p_u$.
\end{proof}

\begin{theorem}[Exact target continuation]
\label{thm:exacttree}
Let all node probabilities satisfy Equation~\ref{eq:nodep}, and let candidate lists be generated according to Equation~\ref{eq:slotq}. The anchor followed by Algorithm~\ref{alg:exacttree} has the same joint distribution as ordinary autoregressive sampling from $\pi_\theta$ for every committed token in the round.
\end{theorem}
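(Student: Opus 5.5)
The plan is to argue by induction on the speculative depth, using Proposition~\ref{prop:onenode} as the single-node building block. The anchor $a$ is drawn exactly from $p_0=\pi_\theta(\cdot\mid x_{\le t})$, so the first committed token already has the correct law. For the later tokens, I would establish the following: conditioned on the committed prefix reaching node $u$, the sequence of tokens emitted by \textsc{VerifyNode}$(u)$ has the same joint law as the corresponding autoregressive continuation of $\pi_\theta$ from $x_{\le t}\,\mathrm{path}(u)$, truncated at the same stopping rule. Here the truncation means the round ends at a fallback token, or at a bonus token after a leaf.

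The key steps are as follows.

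\textbf{Step 1: the first emitted token.} By Proposition~\ref{prop:onenode}, the first token emitted at $u$ is distributed as $p_u$. By Equation~\ref{eq:nodep}, $p_u$ equals the exact target conditional at that prefix.

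\textbf{Step 2: what happens after that token.} Each branch of the algorithm is handled separately.
\begin{itemize}
\item If the token is a fallback, the round ends.
\item If it is an accepted candidate $y$ with a bonus, the bonus is drawn from $p_{\mathrm{child}(u,y)}$. This is again the exact conditional at the extended prefix, by Equation~\ref{eq:nodep}.
\item If it is an accepted candidate with a subtree, the algorithm recurses into $\mathrm{child}(u,y)$. The induction hypothesis, applied to the shallower remaining depth, then gives the exact law of the remaining tokens at the extended prefix.
\end{itemize}

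\textbf{Step 3: chaining.} Combining these by the chain rule of probability yields the joint law $\prod_k \pi_\theta(x_{t+k}\mid x_{<t+k})$ over the committed tokens. The base case is a node with no speculative children. There the node either commits its accepted candidate followed by a bonus, or it emits a fallback. Each of these is a single exact target draw, again by Proposition~\ref{prop:onenode} and Equation~\ref{eq:nodep}.

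The main obstacle is making the conditioning in the induction rigorous. The child subtree's candidates, and the proposal distributions $q_{u,j}$ from Equation~\ref{eq:slotq}, are sampled \emph{before} verification, and they depend on head outputs and Reflex memory. I would handle this by conditioning on the entire proposal tree together with all proposal randomness, which is independent of the verifier uniforms. Given the tree, each $q_{u,j}$ is a fixed valid proposal. Proposition~\ref{prop:onenode} holds for arbitrary proposals, because its residual argument never uses how $q$ was obtained, so the conditional law of the emitted token given the tree is $p_u$ regardless of the tree. Since this holds for every tree, integrating over the tree leaves the law unchanged.

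Two further points need care in the recursion. First, the event of entering $\mathrm{child}(u,y)$ depends only on randomness at $u$. Second, the child's uniforms are fresh. Together these ensure that, conditional on the committed prefix, the subsequent draws are again governed by Proposition~\ref{prop:onenode}. Finally, I would note that canonicalizing duplicate aliases only affects which packed logits are computed. Parent-specific proposal bookkeeping is retained, so the $q_{u,j}$ used in Equation~\ref{eq:nodeaccept} are unchanged and exactness is preserved.
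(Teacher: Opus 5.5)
Your skeleton matches the paper's proof: the anchor is exact, Proposition~\ref{prop:onenode} applies at each visited node, a bonus draw or recursion into $\mathrm{child}(u,y)$ follows, and induction on remaining depth plus the chain rule finishes. The paper's proof never discusses the conditioning. The gap is in the step you single out as the main obstacle.

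You condition on the entire proposal tree together with all proposal randomness. You then claim that, given the tree, the token emitted at $u$ has law $p_u$, because Proposition~\ref{prop:onenode} ``holds for arbitrary proposals.'' It holds for arbitrary proposal \emph{distributions}, but only if the candidates are actually drawn from them. Its proof uses that, given $y_{u,1:j-1}$ and the earlier rejections, the slot-$j$ candidate is still random with law $q_{u,j}$. That randomness is what turns the accepted contribution into $\min\{q_{u,j}(v),r_u^{(j-1)}(v)\}$.

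Once the realized candidates are fixed, the claim is false. Take $p_u=\bar q_u$ uniform on $\{a,b\}$ with a single candidate. The candidate is always accepted, so conditionally on the tree containing candidate $a$, the emitted token is $a$ with probability one. Your intermediate statement that ``the law given the tree is $p_u$ regardless of the tree'' is therefore wrong, not merely unproven, and averaging over trees cannot rescue it. The same objection hits the recursive step, since your conditioning also freezes the child's candidates.

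Note also that $q_{u,j}$ is defined through $R_{u,j}$. It is meaningful only as a conditional law given $y_{u,1:j-1}$, never as something fixed jointly with $y_{u,j}$.

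The repair is to condition along the committed path rather than on the whole tree. At node $u$, condition on:
\begin{itemize}
\item the proposal laws in force there ($\bar q_u$, $C_u$, $m_u$, head and Reflex state);
\item the candidate lists and verifier uniforms of the ancestors of $u$.
\end{itemize}
Do \emph{not} condition on $u$'s own candidates. Do not condition on descendant structure either, such as which candidates received subtrees or how the depth-2 budget was allocated: that is a function of $u$'s candidates and would bias their law.

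With this conditioning the recursion goes through. The tree is built before verification, and the candidates at $\mathrm{child}(u,v)$ are drawn via Equation~\ref{eq:slotq} with randomness independent of $u$'s uniforms, even if their laws depend on $u$'s candidates. So, conditionally on $u$'s history and on acceptance of $v$, the child's candidates still follow their slot-wise law and its uniforms are fresh. Proposition~\ref{prop:onenode} then applies at the child, and the bonus branch is exact directly by Equation~\ref{eq:nodep}. With that $\sigma$-algebra in place, your induction is correct.
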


\begin{proof}
The anchor is sampled from the target by construction. Proposition~\ref{prop:onenode} shows that the first speculative or fallback token below any visited node has target conditional distribution $p_u$. Conditional on accepting token $v$ at node $u$, the child represents exactly the extended prefix $(\mathrm{path}(u),v)$, and the same proposition applies recursively. A rejected node terminates with a target-residual fallback whose marginal is already included in $p_u$; an accepted deepest node terminates with a direct target bonus sample. Induction on remaining tree depth and the autoregressive chain rule therefore give the target joint distribution for the complete committed continuation.
\end{proof}

\paragraph{Sampling versus greedy decoding.}
The exactness claim applies to the stochastic verifier above. A longest-accepted-branch or greedy tree rule may preserve a greedy output under separate conditions, but it is not used for the sampled GRPO rollouts reported here.

\subsection{Justification of the Conditional-Overlap Surrogate}
\label{app:acceptance-surrogate}

We first recall the maximal-coupling interpretation of the overlap
coefficient and then relate the proposed surrogate to the exact node-wise
verifier.

\begin{proposition}[Conditional overlap and verifier acceptance]
\label{prop:acceptance-surrogate}
Let $C\subseteq\mathcal V$ satisfy $p(C),q(C)>0$, and write
$s=p(C)$. Let $A_C^{\mathrm{ver}}$ be the probability that the exact
node-wise verifier accepts at least one speculative candidate from $C$,
conditioned on reaching the node. Suppose that the first candidate is drawn
from $q_C$, as in Equation~\ref{eq:slotq}. Then
\begin{equation}
    p(C)\Omega_C
    \le
    A_C^{\mathrm{ver}}
    \le
    p(C),
    \label{eq:verifier-sandwich}
\end{equation}
where
\begin{equation}
    \Omega_C
    =
    \sum_{v\in C}\min\{p_C(v),q_C(v)\}
    =
    1-\operatorname{TV}(p_C,q_C).
\end{equation}
Consequently,
\begin{equation}
    0
    \le
    A_C^{\mathrm{ver}}-p(C)\Omega_C
    \le
    p(C)\operatorname{TV}(p_C,q_C).
    \label{eq:verifier-surrogate-error}
\end{equation}
Moreover, $\Omega_C$ is the largest agreement probability attainable by
any coupling of $p_C$ and $q_C$.
\end{proposition}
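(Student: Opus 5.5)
The plan has three parts: bound the first slot from below, bound every accepted candidate from above by the mass of $C$, and treat the coupling claim as a standard fact. The lower bound comes from the first slot alone, via Proposition~\ref{prop:acceptance-certificate}. The event ``at least one candidate from $C$ is accepted'' contains the event ``the first candidate $Y_1\sim q_{u,1}=q_C$ is accepted.'' By Equation~\ref{eq:analysis-acceptance-identity}, the probability of the latter is
\[
A_1(p,q_C)=\sum_{v\in C}\min\{p(v),q_C(v)\}.
\]
Setting $a=p(C)\in(0,1]$ and writing $p(v)=a\,p_C(v)$ on $C$, the inequality $\min\{ax,y\}\ge a\min\{x,y\}$ (already used in that proof) gives $A_1\ge p(C)\Omega_C$. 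The identity $\Omega_C=1-\operatorname{TV}(p_C,q_C)$ is the standard overlap identity for two distributions on the common finite set $C$.

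For the upper bound, I will use the proof of Proposition~\ref{prop:onenode}. At each slot $j$, the accepted contribution to output token $v$ is $\min\{q_{u,j}(v),r^{(j-1)}(v)\}$ times the probability of reaching slot $j$. Every candidate lies in $C$ by Equation~\ref{eq:slotq}, so accepted mass is only ever placed on tokens $v\in C$. Summing over slots, the total probability of emitting token $v$ through acceptance at some slot is at most the total probability of emitting $v$ at all. Proposition~\ref{prop:onenode} says the node emits exactly $p_u(v)$ overall, so the acceptance contribution for $v$ is at most $p(v)$. Summing over $v\in C$ gives $A_C^{\mathrm{ver}}\le p(C)$. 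Subtracting $p(C)\Omega_C$ and using $1-\Omega_C=\operatorname{TV}(p_C,q_C)$ then yields Equation~\ref{eq:verifier-surrogate-error}.

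For the coupling claim, I will take any coupling $(X,Y)$ with $X\sim p_C$ and $Y\sim q_C$. Then
\[
\Pr(X=Y)=\sum_v\Pr(X=Y=v)\le\sum_v\min\{p_C(v),q_C(v)\},
\]
so no coupling exceeds $\Omega_C$. To show this is attained, I will use the maximal coupling: with probability $\Omega_C$, draw a common value from the normalized $\min\{p_C,q_C\}/\Omega_C$; otherwise draw $X$ and $Y$ independently from the normalized positive parts $(p_C-q_C)_+$ and $(q_C-p_C)_+$. These residuals have disjoint supports, so $X\ne Y$ on that branch, and the marginals are checked by adding the two pieces.

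The main obstacle I expect is the upper bound: the claim that acceptance mass on each token is dominated by $p(v)$ must hold across multiple slots with residual updates. I will settle it by the decomposition in Proposition~\ref{prop:onenode}, which writes the emitted law $p_u$ as a sum of nonnegative accepted contributions plus the fallback mass. It then follows immediately that each accepted contribution is bounded by the total, so no separate slot-by-slot accounting is needed.
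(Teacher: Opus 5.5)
Your proposal is correct and follows essentially the same route as the paper. The lower bound comes from first-slot acceptance together with $\min\{ax,y\}\ge a\min\{x,y\}$ for $a=p(C)\le1$. The upper bound comes from the node's first emitted token being distributed as $p$ (Proposition~\ref{prop:onenode}) while every accepted candidate lies in $C$. The coupling claim is the standard maximal-coupling construction. One small nit: Equation~\ref{eq:analysis-acceptance-identity} is stated under the hypothesis $\operatorname{supp}(p|_C)\subseteq\operatorname{supp}(q_C)$, which is not assumed here. The identity nevertheless holds without it, because tokens with $q_C(v)=0$ are never proposed and contribute $\min\{p(v),0\}=0$.
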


\begin{proof}
For any nonnegative $a$ and $b$,
\begin{equation}
    \min\{a,b\}
    =
    \frac{a+b-|a-b|}{2}.
\end{equation}
Because both $p_C$ and $q_C$ sum to one,
\begin{align}
    \Omega_C
    &=
    \frac{1}{2}
    \sum_{v\in C}
    \left(
        p_C(v)+q_C(v)-|p_C(v)-q_C(v)|
    \right)
    \nonumber\\
    &=
    1-\frac{1}{2}\sum_{v\in C}|p_C(v)-q_C(v)|
    \nonumber\\
    &=
    1-\operatorname{TV}(p_C,q_C).
    \label{eq:overlap-tv-proof}
\end{align}

To prove the coupling statement, consider any coupling $(X,Y)$ with
marginals $p_C$ and $q_C$. For each $v\in C$,
\begin{equation}
    \Pr(X=Y=v)
    \le
    \min\{p_C(v),q_C(v)\}.
\end{equation}
Summing over $v$ gives
\begin{equation}
    \Pr(X=Y)\le\Omega_C.
\end{equation}
This bound is achievable. Assign common probability mass
\begin{equation}
    m(v)=\min\{p_C(v),q_C(v)\}
\end{equation}
to the event $X=Y=v$. The remaining $p_C$- and $q_C$-masses have disjoint
supports and can therefore be coupled without further agreement. This
constructs a coupling satisfying $\Pr(X=Y)=\sum_v m(v)=\Omega_C$.

It remains to relate this overlap to verifier acceptance. Let
$A_C^{(1)}$ be the probability that the first candidate is accepted.
Because the first candidate $Y$ is distributed as $q_C$ and is accepted
using the exact rejection-sampling rule,
\begin{align}
    A_C^{(1)}
    &=
    \sum_{v\in C}
    q_C(v)
    \min\left\{
        1,\frac{p(v)}{q_C(v)}
    \right\}
    \nonumber\\
    &=
    \sum_{v\in C}\min\{q_C(v),p(v)\}.
    \label{eq:first-slot-acceptance}
\end{align}
Since $p(v)=s p_C(v)$ for $v\in C$ and $s\le1$, we have
$q_C(v)\ge s q_C(v)$. Hence,
\begin{align}
    A_C^{(1)}
    &\ge
    \sum_{v\in C}
    \min\{s q_C(v),s p_C(v)\}
    \nonumber\\
    &=
    s\sum_{v\in C}\min\{q_C(v),p_C(v)\}
    \nonumber\\
    &=
    p(C)\Omega_C.
    \label{eq:first-slot-lower-bound}
\end{align}
Acceptance at the first slot implies acceptance somewhere in the node, so
\begin{equation}
    A_C^{\mathrm{ver}}
    \ge
    A_C^{(1)}
    \ge
    p(C)\Omega_C.
\end{equation}

For the upper bound, let $X$ be the first token emitted by the exact
node-wise verifier. By exact verification, $X\sim p$. Whenever a
speculative candidate is accepted, the emitted token belongs to $C$.
Therefore,
\begin{equation}
    A_C^{\mathrm{ver}}
    \le
    \Pr(X\in C)
    =
    p(C).
\end{equation}
Combining the two bounds proves Equation~\ref{eq:verifier-sandwich}.
Finally,
\begin{align}
    A_C^{\mathrm{ver}}-p(C)\Omega_C
    &\le
    p(C)-p(C)\left[
        1-\operatorname{TV}(p_C,q_C)
    \right]
    \nonumber\\
    &=
    p(C)\operatorname{TV}(p_C,q_C),
\end{align}
which proves Equation~\ref{eq:verifier-surrogate-error}.
\end{proof}
\section{End-to-End Algorithm and Implementation}
\label{app:algorithm}
\begin{algorithm}[H]
\small
\caption{One exact speculative round for sequence $i$}
\begin{algorithmic}[1]
\State Compute $h_t=f_\theta(x_{\le t})$ and sample anchor $a\sim\pi_\theta(\cdot\mid x_{\le t})$.
\For{$h=1,\ldots,H$}
  \State Predict position $t+h+1$ with $z_{t,h}=g_{\phi_h}(h_t)$.
  \If{feedback, alignment, and coherence gates pass}
    \State Form bounded $\Delta z_{t,h}$ from $m_{i,h}$ and apply the boundary safeguard.
  \Else
    \State $\Delta z_{t,h}\gets0$.
  \EndIf
  \State Form the corrected head distribution and store a detached proposal record.
\EndFor
\State Set $N_t$ from active concurrency; build ordered, without-replacement child lists using Equation~\ref{eq:slotq}.
\State Evaluate $p_u$ for every unique tree prefix in one packed target forward.
\State Append $a\,\Vert\,\Call{VerifyNode}{\mathrm{root}}$ using Algorithm~\ref{alg:exacttree}.
\For{each record whose target position has matured}
  \State Build $S$; compute $r^{\mathrm{dist}},r^{\mathrm{cov}},w$; update memory, alignment statistics, and reservoir.
\EndFor
\State If persistent drift and guardrails pass, run the bounded auxiliary update.
\end{algorithmic}
\end{algorithm}

\subsection{Implementation and Reproducibility Details}
\label{app:implementation}

\paragraph{Training protocol.}
We train Qwen2.5-1.5B, 3B, 7B, and 14B and
Llama-3.1-8B on GSM8K, SimpleRL-Abel-Level3to5, and
DAPO-Math-17K. For each dataset, training uses seed $42$ and proceeds for one epoch.
Each prompt produces eight responses. Rollouts use stochastic sampling
with temperature $1.0$, nucleus threshold $0.95$, no top-$k$
truncation, a maximum prompt length of $2048$ tokens, and a maximum
total sequence length of $2048$ tokens.

The per-device prompt batch sizes for Qwen2.5-1.5B, 3B, 7B, 14B,
and Llama-3.1-8B are $16$, $8$, $8$, $4$, and $8$, respectively.
The corresponding gradient-accumulation factors are $2$, $4$, $4$,
$8$, and $4$, yielding an effective batch of $32$ prompts, or $256$
sampled responses, per policy update. Groups with zero reward variance
do not contribute to the GRPO update. The reward is
\begin{equation}
    R(x)=R_{\mathrm{acc}}(x)+0.2R_{\mathrm{fmt}}(x),
\end{equation}
where $R_{\mathrm{acc}}$ is symbolic answer correctness and
$R_{\mathrm{fmt}}$ indicates compliance with the prescribed reasoning
format.

The target policy is optimized with AdamW at learning rate
$10^{-6}$. GRPO uses KL coefficient $\beta=0.04$ and clipping
parameter $\epsilon=0.1$. Policy adaptation is restricted to LoRA
parameters with rank $64$, scale $32$, and zero dropout. LoRA modules
are attached to the query, key, value, output, gating, up-projection,
and down-projection transformations. These target-side settings are
identical for vanilla GRPO, FastGRPO, and all \textbf{SpecRoll}
variants.

\paragraph{Future-token proposer.}
We instantiate three horizon-specific heads for positions $t+2$,
$t+3$, and $t+4$. Each head applies a residual transformation
\begin{equation}
    g_{\phi_h}(h)
    =
    h+\operatorname{SiLU}
    \!\left(A_h\operatorname{LN}(h)+b_h\right)
\end{equation}
and uses the frozen target vocabulary projection to obtain proposal
logits. The target backbone and vocabulary projection are not updated
during head pretraining or auxiliary adaptation.

The heads are pretrained for one epoch on the complete
ShareGPT-V4.3-derived corpus using assistant-token cross-entropy.
The three horizon losses are weighted by $1$, $0.8$, and $0.8^2$.
Pretraining uses AdamW with learning rate $3\times10^{-4}$, zero
weight decay, $100$ linear warm-up steps, gradient-norm bound $1.0$,
and maximum sequence length $1024$. The model-specific microbatch and
accumulation pairs are $(16,2)$, $(8,4)$, $(4,8)$, $(2,16)$, and
$(4,8)$ in the model order stated above, giving a common effective
pretraining batch size of $32$.

For comparison, the FastGRPO drafter is pretrained independently on
the same ShareGPT-derived corpus for one epoch, with maximum sequence
length $2048$, learning rate $5\times10^{-5}$, and a $5\%$ linear
warm-up fraction. Its online drafter learning rate is $10^{-4}$.
Thus, the two proposers share their pretraining data source, although
their architectures, objectives, and sequence lengths differ.

\paragraph{Concurrency-aware sparse tree verification.}
The per-response tree budget is adapted to the number of unfinished
responses $B_t$ as
\begin{equation}
    N_t=
    \operatorname{clip}\!\left(
        \left\lfloor C_{\mathrm{peak}}/B_t\right\rfloor,
        N_{\min},N_{\max}
    \right).
\end{equation}
We set $C_{\mathrm{peak}}=512$, $N_{\min}=1$, and $N_{\max}=10$,
with the root included in $N_t$. The nominal non-root allocation across
the three prediction horizons is $(5,4,0)$ and is truncated when necessary
to satisfy the available budget. All represented prefixes are evaluated
jointly in a single packed target-model forward pass.
\paragraph{Verifier-derived feedback.}
For each matured proposal, the feedback support is the deduplicated
union of the proposal top-$48$ tokens, target top-$48$ tokens, the
realized target token, and the raw candidate identifiers. The union is
truncated to at most $48$ entries. Both target and proposal
distributions are renormalized on this common support. Distribution
and coverage directions are combined with respective weights $0.15$
and $1.25$. Feedback severity is
\begin{equation}
    w
    =
    \operatorname{clip}
    \left(
        0.3\,\operatorname{TV}(p_S,q_S)
        +0.7\,p^{\mathrm{out}},
        0,1
    \right),
\end{equation}
and observations with $w<0.03$ are discarded. The coverage term uses
the four proposal tokens nearest the selection boundary, and the
candidate-boundary safeguard requires a positive logit-margin change
with margin $0.15$.

\paragraph{Reflex configuration.}
Reflex maintains an independent FP32 memory for every
trajectory--horizon pair. Its memory update uses horizon-specific
decay $\rho_k$. Proposal-time and verifier-derived directions are
represented by normalized random projections of rank $24$, generated
with seed $29$. Alignment means and variances are accumulated using
the exact online sample moments; no exponential averaging is applied
to these alignment statistics.

Let $[\alpha_k^{\min},\alpha_k^{\max}]$ denote the base
relative-RMS interval. Conditional on an eligible reliability score,
the correction ratio lies in
\begin{equation}
    [\alpha_k^{\min},\alpha_k^{\max}],
\end{equation}
before any counterfactual safety reduction. Table~\ref{tab:reflex-hparams}
reports the shared horizon-specific configuration. The quantities
$n_k^{\mathrm{fb}}$ and $n_k^{\mathrm{align}}$ are the minimum effective
feedback and alignment counts, respectively.

\begin{table*}[t]
\centering
\small
\setlength{\tabcolsep}{4pt}
\begin{tabular}{lcccccc}
\toprule
Models
& $(\rho_1,\rho_2,\rho_3)$
& $(n_1^{\mathrm{fb}},n_2^{\mathrm{fb}},n_3^{\mathrm{fb}})$
& $(n_1^{\mathrm{align}},n_2^{\mathrm{align}},n_3^{\mathrm{align}})$
& Correction intervals (\%)
& $z_\delta$
& Feedback stride \\
\midrule
All models
& (.85,.90,.95)
& (1,2,4)
& (6,16,48)
& (1.0--2.5,\ .4--1.0,\ .25--.6)
& .60 & 4 \\
\bottomrule
\end{tabular}
\caption{Shared horizon-specific Reflex hyperparameters. Correction
intervals are expressed as percentages of the proposal hidden-state RMS.}
\label{tab:reflex-hparams}
\end{table*}

Counterfactual safety statistics compare raw and corrected candidate
mass and candidate wins. The minimum probe count is $128$ for all models.
The candidate mass and net-win deadbands are $5\times10^{-4}$ and
$2\times10^{-3}$. One adverse window reduces the active ratio by a factor
of $0.5$; two favorable windows permit gradual recovery.

\paragraph{Auxiliary head adaptation.}
Auxiliary adaptation updates only the horizon-specific proposal heads using
detached verifier records; the target backbone and vocabulary projection
remain frozen. An auxiliary-update event is triggered only if there exists a horizon $k$
satisfying all of the following conditions: (i) drift is detected in three
consecutive monitoring windows of 64 matured records, where restricted TV
exceeds its calibrated baseline and either acceptance falls below its baseline
or candidate regret rises above its baseline; (ii) the reservoir contains at
least $N_{\min}=64$ matured records for horizon $k$; (iii) at least
$G_{\min}=10$ complete rollout batches have elapsed since the preceding
auxiliary-update event; and (iv) the projected cumulative auxiliary time does
not exceed $1.8\%$ of rollout-generation time. Formally,
\begin{equation}
\begin{aligned}
\mathrm{Eligible}_k
={}&D_k^{(3)}
\land [N_k\ge 64]
\land [g-g_{\mathrm{last}}\ge 10]
\\[-2pt]
&\land
\left[
\frac{T_{\mathrm{aux}}+\widehat c_{\mathrm{upd}}}
     {T_{\mathrm{roll}}}
\le 0.018
\right].
\end{aligned}
\end{equation}
where $g$ is the current completed-rollout-batch index,
$g_{\mathrm{last}}$ is the index of the most recent auxiliary update, and
$\widehat c_{\mathrm{upd}}$ is the estimated cost of the pending update.
If multiple horizons are eligible, only the one with the largest calibrated
degradation is updated. The selected update uses at most 256 mature records.

For the selected horizon, we minimize
\begin{align}
    \mathcal L_{\mathrm{aux}}
    ={}&
    D_{\mathrm{KL}}(p_S\Vert q_S^{\mathrm{raw}})
    +0.1D_{\mathrm{KL}}(p_S\Vert q_S^{\mathrm{eff}})
    \notag\\[-2pt]
    &+1.5\mathcal L_{\mathrm{rank}}
    +\lambda_p\mathcal L_{\mathrm{prox}},
\end{align}
using temperature $1.0$ and ranking margin $0.20$. Each event performs one
AdamW step with zero weight decay, numerical constant $10^{-6}$, and
gradient-norm bound $1.0$. The resulting parameter displacement is contracted
by $1/(1+\lambda_p)$.

\begin{table*}[t]
\centering
\small
\setlength{\tabcolsep}{4pt}
\begin{tabular}{lccccccc}
\toprule
Model & LR & Records & Rollout gap & Buffer & Batch
& Overhead & $\lambda_p$ \\
\midrule
Qwen2.5-1.5B & $8{\times}10^{-5}$ & 256 & 10 & 4096 & 64 & 1.8\% & .020 \\
Qwen2.5-3B   & $7{\times}10^{-5}$ & 256 & 10 & 4096 & 64 & 1.8\% & .020 \\
Qwen2.5-7B   & $5{\times}10^{-5}$ & 256 & 10 & 4096 & 64 & 1.8\% & .020 \\
Qwen2.5-14B  & $3{\times}10^{-5}$ & 256 & 10 & 4096 & 64 & 1.8\% & .020 \\
Llama-3.1-8B & $4{\times}10^{-5}$ & 256 & 10 & 4096 & 64 & 1.8\% & .020 \\
\bottomrule
\end{tabular}
\caption{Auxiliary-adaptation hyperparameters. Records denotes the maximum number of mature records used per update.}
\label{tab:aux-hparams}
\end{table*}

After three warm-up updates, the record budget is multiplied by $0.75$ after
two consecutive budget violations and by $1.20$ after four consecutive
compliant updates.

\paragraph{Metrics and timing.}
Let $\mathcal T_r$ be the candidate tree and $\mathcal P_r$ the
committed target-consistent path in verification round $r$, both
including the target-sampled root. The reported metrics are
\begin{align}
    \operatorname{AAL}
    &=
    \frac{1}{R}\sum_{r=1}^{R}|\mathcal P_r|,
    \\
    \operatorname{AR}
    &=
    \frac{\sum_{r=1}^{R}(|\mathcal P_r|-1)}
         {\sum_{r=1}^{R}(|\mathcal T_r|-1)}.
\end{align}
Thus, AAL includes the exact root, whereas acceptance rate excludes the
root but counts every non-root candidate placed in the tree.

All experiments use a single NVIDIA B200. Generation time includes
target prefill, future-token-head evaluation, tree construction,
packed target evaluation, accepted-path cache extraction, verifier
feedback, and Reflex bookkeeping. Complete wall-clock time is measured
from the first training batch through the final checkpoint and further
includes reward computation, policy optimization, auxiliary head
updates, and checkpointing; model and dataset initialization are
excluded. Each reported table entry corresponds to one completed
matched run and is normalized by the corresponding vanilla-GRPO run.
\section{Runtime Scaling with Training Progress}
\label{app:runtime-scaling}

To examine whether the reported end-to-end measurements reflect stable
runtime accumulation throughout training, rather than an advantage
concentrated in a particular training phase, we record cumulative wall-clock
time at 20\% intervals for Qwen2.5-1.5B on
SimpleRL-Abel-Level3to5. FastGRPO and SpecRoll use the same target checkpoint,
training examples and their order, GRPO configuration, response limit,
reward computation, and hardware environment. The rollout engine is the only
method-level difference.

\begin{figure*}[t]
    \centering
    \includegraphics[width=\textwidth]
    {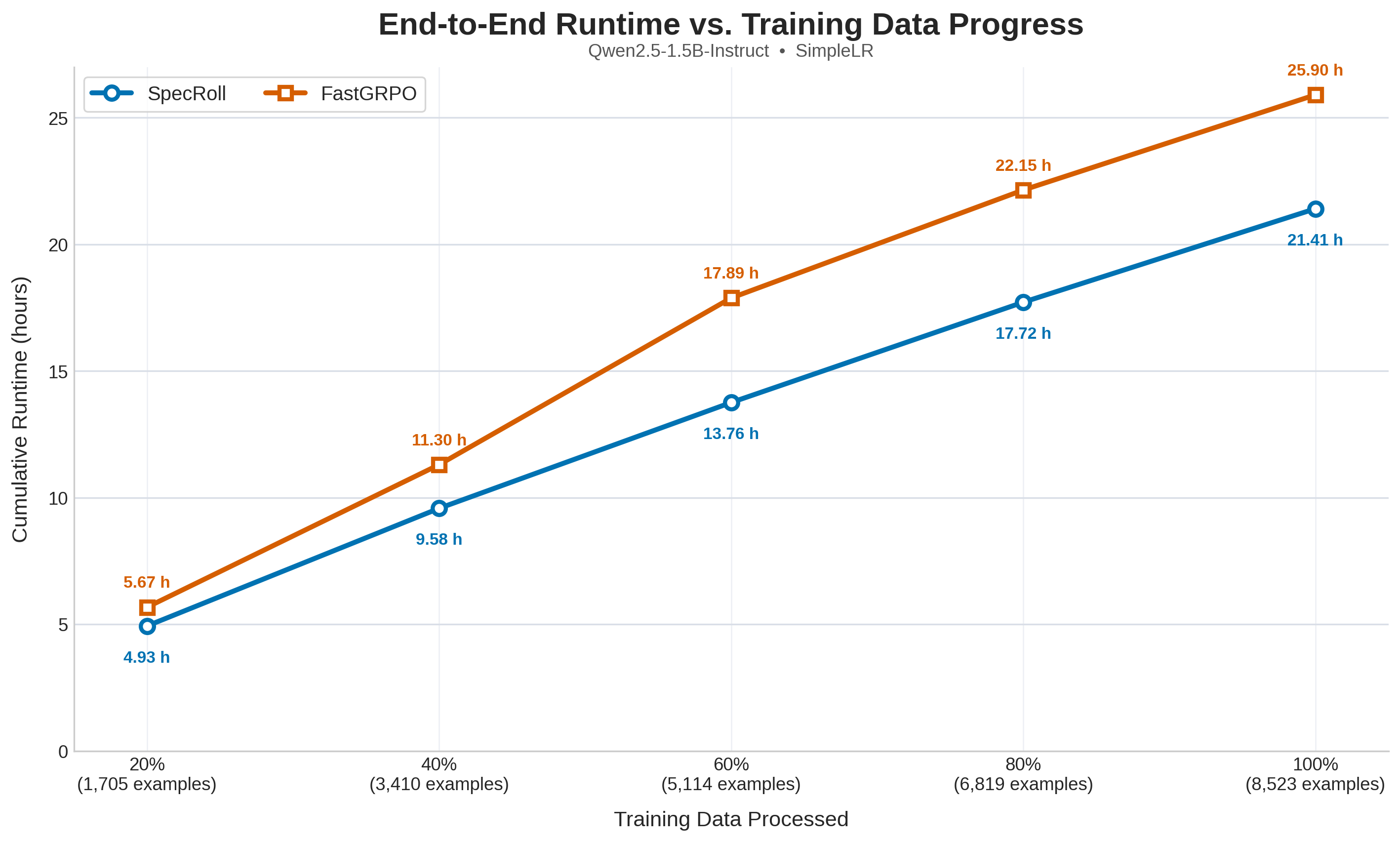}
    \caption{
    Cumulative end-to-end runtime as a function of the fraction of training
    examples processed for Qwen2.5-1.5B on SimpleRL-Abel-Level3to5.
    Runtime is recorded at 20\% intervals from the same matched training run.
    Both methods exhibit approximately linear cumulative scaling, while
    SpecRoll remains faster than FastGRPO at every checkpoint.
    }
    \label{fig:runtime-progress}
\end{figure*}

Figure~\ref{fig:runtime-progress} shows that cumulative runtime grows
approximately linearly with the number of processed training examples for
both rollout engines. A least-squares linear fit over the five checkpoints
gives $R^2=0.998$ for SpecRoll and $R^2=0.988$ for FastGRPO. SpecRoll remains
faster at every observed checkpoint, and the cumulative runtime gap increases
from 0.74 hours after 20\% of the data to 4.49 hours after the complete run.
Thus, the final throughput advantage is not caused by a short-lived gain or
a late-stage timing anomaly.

The near-linear traces also make partial-run completion estimates possible,
although such estimates should be interpreted as approximate rather than
exact. Fitting a line to the observations available through 40\% of training
predicts completion times of 23.53 hours for SpecRoll and 28.19 hours for
FastGRPO, compared with the observed 21.41 and 25.90 hours. Using observations
through 60\% predicts 22.67 and 29.95 hours, respectively. These estimates
recover both the ordering and the approximate scale of the final runtimes,
but their accuracy can still be affected by changes in response length,
active-batch composition, and the timing of auxiliary updates.

\section{Predictive Validity of the Reflex Reliability Gate}
\label{app:gatecalibration}

\begin{figure*}[t]
    \centering
    \includegraphics[width=\textwidth]
    {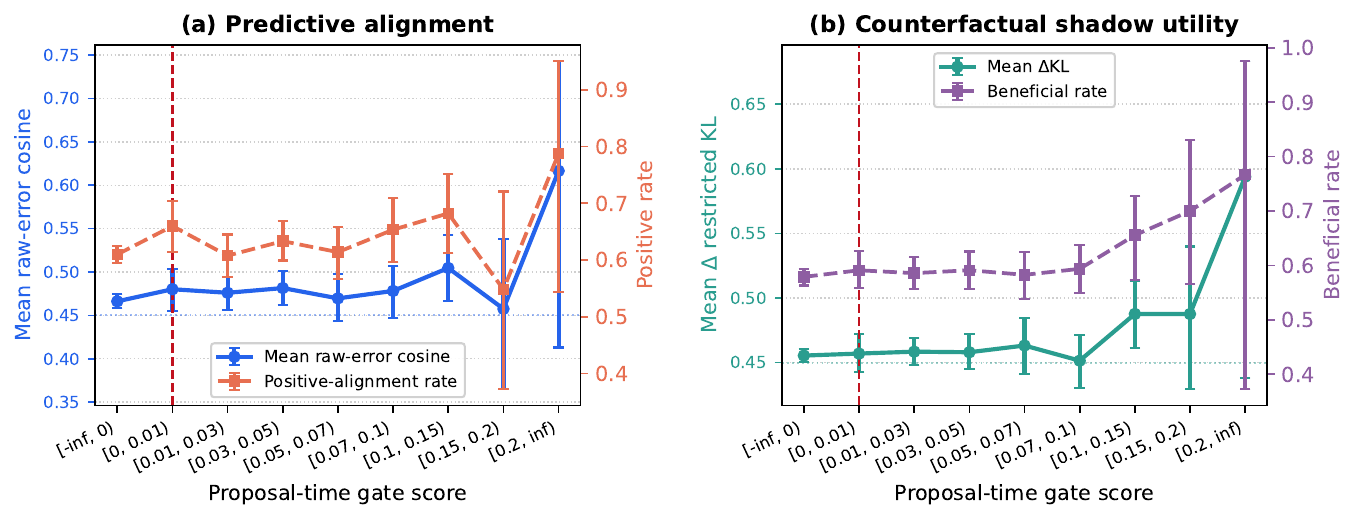}
    \caption{
    Predictive validity of the proposal-time Reflex reliability score on
    Qwen2.5-1.5B with SimpleRL-Abel-Level3to5.
    \textbf{Panel (a):} alignment between the stored memory direction and the
    verifier error observed after the proposal record matures.
    \textbf{Panel (b):} restricted-KL improvement from a counterfactual shadow
    correction. Points show averages within each proposal-time score bin, and
    the dashed line marks the operational threshold
    $\widehat{\operatorname{Rel}}=0$.
    }
    \label{fig:gatecalibration}
\end{figure*}

The reliability gate is designed to reuse a stored memory direction only when
past delayed feedback suggests that it will remain useful for future
proposals. To test this claim, we run a Reflex-only diagnostic on
Qwen2.5-1.5B with SimpleRL-Abel-Level3to5, with auxiliary adaptation disabled.
We use the first prediction horizon and keep the target model, pretrained
proposal heads, candidate budget, verifier, and training configuration fixed.

For each proposal record $j$, we save the reliability score and a normalized
sketch $s_j$ of the memory direction before its verifier error is available.
When the record later matures, we obtain the raw verifier-error direction
$e_j^{\mathrm{raw}}$ and measure
\begin{equation}
    c_j^{\mathrm{raw}}
    =
    \cos\!\left(s_j,e_j^{\mathrm{raw}}\right).
    \label{eq:gate-realized-alignment}
\end{equation}
A positive value means that the stored memory direction predicted the
direction of the error revealed later.

We also test whether following the stored direction would improve the proposal
distribution. For this purpose, we construct a small shadow correction
\begin{equation}
\begin{aligned}
    z_j^{\mathrm{shadow}}
    ={}&z_j^{\mathrm{raw}}
    +\alpha_{\mathrm{shadow}}
    \operatorname{RMS}(z_j^{\mathrm{raw}})\\
    &{}\times
    \frac{m_j}{\operatorname{RMS}(m_j)+\epsilon},
\end{aligned}
    \label{eq:shadow-correction}
\end{equation}
with $\alpha_{\mathrm{shadow}}=0.005$, and compute
\begin{equation}
\begin{aligned}
    \Delta\mathrm{KL}_j^{\mathrm{shadow}}
    &=
    D_{\mathrm{KL}}
    \!\left(p_S\Vert q_{S,j}^{\mathrm{raw}}\right)\\
    &\quad-
    D_{\mathrm{KL}}
    \!\left(p_S\Vert q_{S,j}^{\mathrm{shadow}}\right).
\end{aligned}
\label{eq:shadow-kl}
\end{equation}
A positive $\Delta\mathrm{KL}_j^{\mathrm{shadow}}$ means that the shadow step
moves the proposal distribution closer to the target. This state is used only
for analysis and never affects the actual rollout.

As shown in Figure~\ref{fig:gatecalibration}, proposals in the highest score
bin exhibit the strongest subsequent alignment, reaching a mean cosine of
about $0.62$ and a positive-alignment rate of about $0.78$. The same bin also
produces the largest counterfactual gain, with a mean restricted-KL
improvement of about $0.59$ and a beneficial rate close to $0.78$. The
low- and intermediate-score bins show smaller differences, while the clearest
separation appears for strongly positive scores. These results indicate that
the reliability score can identify memory directions that are more likely to
remain useful when delayed verifier feedback arrives, supporting its use as a
conservative trigger for Reflex.
\begin{figure*}[t]
    \centering
    \includegraphics[width=\textwidth]{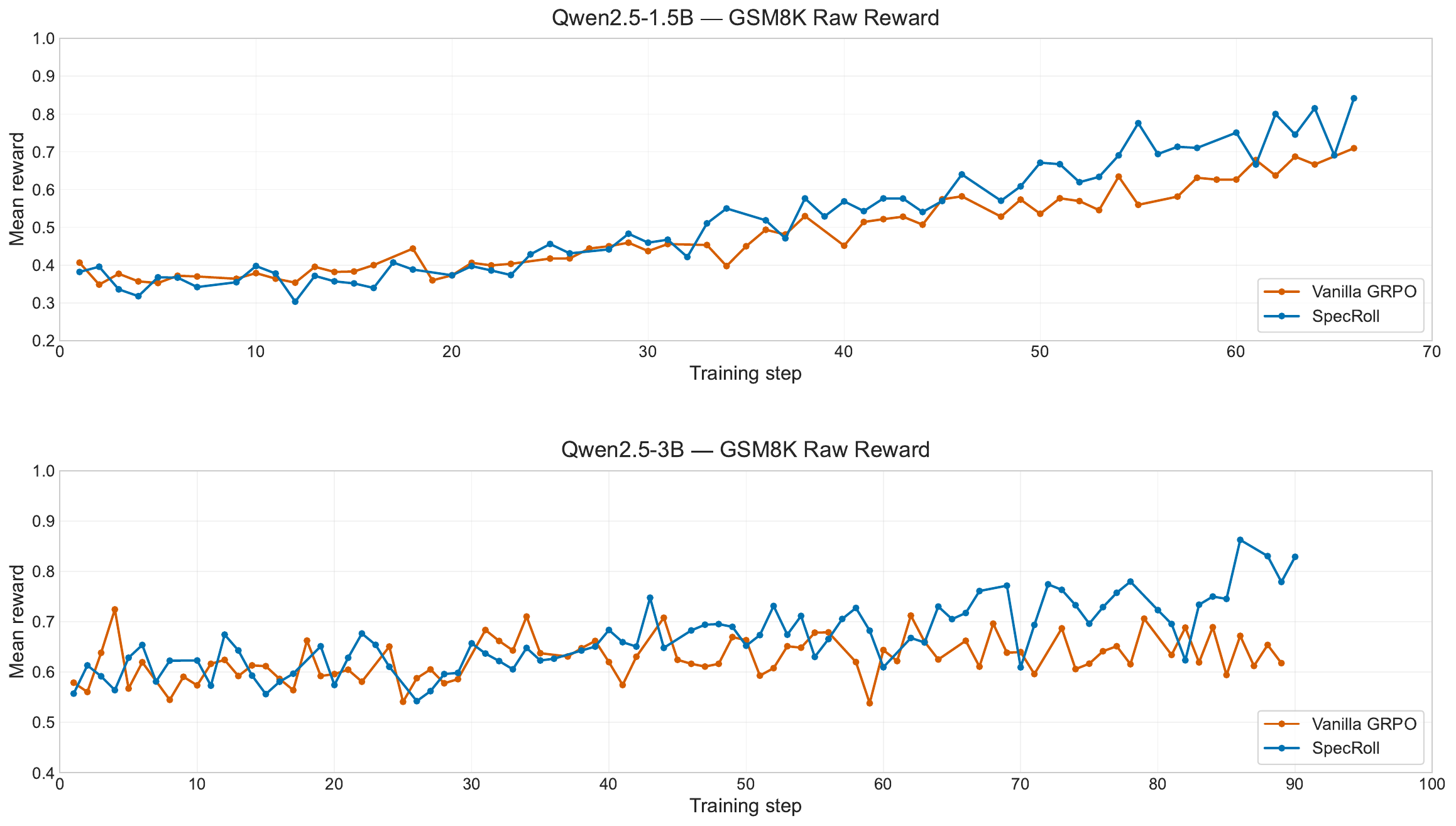}
    \caption{\textbf{Training-reward dynamics on GSM8K.}
    SpecRoll and vanilla GRPO exhibit comparable raw-reward trajectories
    for Qwen2.5-1.5B and Qwen2.5-3B.}
    \label{fig:gsm8k-raw-reward}
\end{figure*}
\section{Sparse-Support \texorpdfstring{Top-$k$}{Top-k} Ablation}
\label{app:topk}
We vary the support top-$k$ budget in Equation~\ref{eq:support} on Qwen2.5-3B with SimpleRL-Abel-Level3to5. The $k=48$ row is the setting used in the main result. All other components, including candidate-tree budget and Reflex configuration, are unchanged.

\begin{table}[t]
\centering
\small
\setlength{\tabcolsep}{7.0pt}
\renewcommand{\arraystretch}{1.22}
\begin{tabular}{@{}ccccc@{}}
\toprule
Top-$k$ & Gen. & E2E & AAL & AR \\
\midrule
16 & 1.43 & 1.39 & 1.720 & 0.089 \\
\addlinespace[1.5pt]
24 & 1.44 & 1.40 & 1.724 & 0.090 \\
\addlinespace[1.5pt]
32 & 1.35 & 1.33 & 1.723 & 0.087 \\
\addlinespace[1.5pt]
48 & \textbf{1.46} & \textbf{1.42} & \textbf{1.726} & \textbf{0.093} \\
\addlinespace[1.5pt]
64 & 1.43 & 1.39 & 1.722 & 0.089 \\
\bottomrule
\end{tabular}
\caption{Sparse-support sensitivity on Qwen2.5-3B with SimpleRL-Abel-Level3to5. Speedups are relative to matched vanilla GRPO; AR denotes acceptance rate.}
\label{tab:topkablation}
\end{table}

The quality diagnostics remain within a narrow range but are no longer flat. Increasing $k$ from 16 to 24 raises AAL by 0.004 and acceptance rate by 0.001; at $k=48$, the gains relative to $k=16$ reach 0.006 AAL and 0.004 acceptance rate. The $k=32$ run instead drops to 0.087 acceptance rate, and $k=64$ returns to the $k=16$ acceptance rate. Across the sweep, AAL varies by 0.006 and acceptance rate by 0.006. This suggests that the support budget has a modest, non-monotonic effect on the restricted error geometry and candidate acceptance.

Runtime is likewise non-monotonic. Relative to $k=16$, $k=24$ improves end-to-end speed by about 0.7\%, and $k=48$ by about 2.2\%. The $k=32$ run is approximately 4.3\% slower end to end and also has the lowest acceptance rate, while $k=64$ returns to the $k=16$ timing. The $k=48$ configuration is best on both throughput and acceptance diagnostics, although the absolute differences remain small and the intermediate settings are not ordered by support size. We therefore use $k=48$ in the main experiments because it gives the strongest observed generation speedup, end-to-end speedup, AAL, and acceptance rate in this sweep. The non-monotonic pattern should nevertheless be interpreted as a sensitivity result rather than a general scaling law.

\section{Training-Reward Dynamics}
\label{app:reward-dynamics}

SpecRoll and vanilla GRPO on GSM8K. Across both Qwen2.5-1.5B and
Qwen2.5-3B, the two methods exhibit comparable reward trajectories and
similar overall learning trends. This result indicates that replacing the
standard rollout procedure with SpecRoll does not visibly alter or degrade
the underlying GRPO optimization dynamics.

\begin{figure*}[t]
    \centering
    \includegraphics[width=\linewidth]{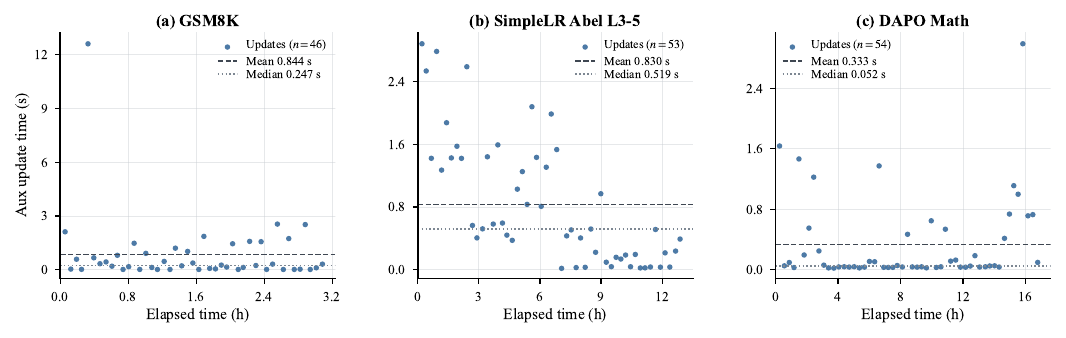}
    \caption{\textbf{Wall-clock latency of triggered auxiliary updates.}
    Each point denotes one update; dashed and dotted lines indicate the mean
    and median latency, respectively.}
    \label{fig:aux-update-time}
\end{figure*}

\section{Auxiliary-Update Overhead}
\label{app:aux-overhead}
\begin{table}[t]
    \centering
    \small
    \setlength{\tabcolsep}{4.5pt}
    \begin{tabular}{lcccc}
        \toprule
        Dataset & Triggers & Median & Mean & Maximum \\
        \midrule
        GSM8K               & 46 & 0.247\,s & 0.844\,s & 12.596\,s \\
        SimpleLR Abel L3--5 & 53 & 0.519\,s & 0.830\,s & 2.879\,s \\
        DAPO Math           & 54 & 0.052\,s & 0.333\,s & 2.988\,s \\
        \bottomrule
    \end{tabular}
    \caption{\textbf{Frequency and latency of auxiliary updates.}}
    \label{tab:aux-update-time}
\end{table}
Figure~\ref{fig:aux-update-time} visualizes the latency of individual
auxiliary updates, while Table~\ref{tab:aux-update-time} summarizes their
frequency and latency statistics. The auxiliary path is triggered only
occasionally, with 46 updates on GSM8K, 53 on SimpleLR Abel L3--5, and
54 on DAPO Math. The median update times are $0.247$\,s, $0.519$\,s,
and $0.052$\,s, respectively, while the mean remains below $0.85$\,s
for all three datasets. The larger maximum of $12.596$\,s on GSM8K
results from a rare outlier rather than persistent overhead. Overall,
auxiliary adaptation introduces sparse and predominantly sub-second
computation.

\end{document}